\newtheorem{lemm}{Lemma}
\definecolor{colora}{rgb}{1,0,0}
\definecolor{colorab}{rgb}{1,1,0}
\definecolor{colorad}{rgb}{0,1,0}
\definecolor{colorbd}{rgb}{0,1,1}
\definecolor{colorbc}{rgb}{0,0,1}
\definecolor{colorcd}{rgb}{1,0,1}
\definecolor{colord}{rgb}{0.87,0.49,0}
\definecolor{colorb}{rgb}{0.5,0.5,0.5}
\definecolor{colorc}{rgb}{0.5,0.5,0.5}
\definecolor{boxcolor}{rgb}{0.9,0.9,0.9}
\theoremstyle{definition}
\newtheorem{mybox}{Box}
\newcommand{\vect}[1]{{\text{vec}\br{#1}}}
\DeclareMathOperator{\E}{\mathbb{E}}
\newcommand{\ave}[1]{\E\sq{#1}}
\newcommand{\pdiff}[2]{\frac{\partial^2}{\partial #1\partial #2}}
\renewcommand{\eqref}[1]{equation(\ref{#1})}
\newcommand{\m}[1]{{#1}}
\newcommand{\bmp}[1]{\begin{minipage}{#1}}
\newcommand{\bmpp}[2]{\begin{minipage}[#1]{#2}}
\newcommand{\emp}{\end{minipage}}
\tikzstyle{cont}=[circle,draw=blue!50,thick,minimum size=6mm,line width=2pt,>=stealth]  
\tikzstyle{ocont}=[ellipse,draw=blue!50,thick,minimum size=6mm,line width=2pt,>=stealth]  
\tikzstyle{blackcont}=[circle,draw=black!50,thick,minimum size=6mm,line width=2pt,>=stealth]  
\tikzstyle{oval}=[ellipse,draw=blue!50,thick,minimum size=6mm,line width=2pt,>=stealth]  
\tikzstyle{disc}=[rectangle,draw=blue!50,thick,minimum size=6mm]  
\tikzstyle{obs}=[fill=blue!20,thick]  
\tikzstyle{fillred}=[fill=red!20,thick]  
\tikzstyle{fillgreen}=[fill=green!20,thick]  
\tikzstyle{purered}=[fill=red]  
\tikzstyle{state}=[rectangle,fill=red!20]  
\tikzstyle{traffic}=[rectangle]  
\tikzstyle{sobs}=[fill=green!15,thick]  
\tikzstyle{fact}=[fill,minimum size=1.5mm,line width=2pt,>=stealth]
\tikzstyle{varfact}=[draw,minimum size=1.5mm,line width=2pt,>=stealth]
\tikzstyle{sep}=[rectangle,draw=magenta!50,thick,minimum size=6mm]  
\tikzstyle{det}=[fill=red!15,rectangle,draw=red!50,thick,minimum size=6mm]  
\tikzstyle{dethid}=[diamond,draw=red!50,thick,minimum size=6mm]  
\tikzstyle{lineball}=[fill,-*,draw=red!50,line width=1.5pt]
\tikzstyle{redball}=[mark=*,mark options={fill=red!50,draw=red},mark size=0.5pt]
\tikzstyle{greenball}=[mark=*,mark options={fill=green!50,draw=green},mark size=0.5pt]
\tikzstyle{hid}=[circle,draw,thick]  
\tikzstyle{dec}=[rectangle,draw=red!50,thick,minimum size=6mm]  
\tikzstyle{utility}=[diamond,draw=red!50,thick,minimum size=6mm]  
\tikzstyle{contdec}=[circle,draw=blue!50,thick,fill=blue!10,line width=2pt]  
\tikzstyle{decutility}=[diamond,draw=red!50,thick,minimum size=6mm]  
\tikzstyle{contobs}+=[cont]
\tikzstyle{contobs}+=[obs]
\tikzstyle{discobs}+=[disc]
\tikzstyle{discobs}+=[obs]
\tikzstyle{obsred}+=[obs]
\tikzstyle{obsred}+=[red]
\tikzstyle{background grid}=[draw, black!50,step=.1cm]
\tikzstyle{dgraph}=[->, line width=1.5pt]
\tikzstyle{ugraph}=[line width=1.5pt]
\newcommand{\y}{y}
\newcommand{\x}{x}
\newcommand{\chapref}[1]{chapter XX}
\newcommand{\br}[1]{\left( {#1} \right)}
\newcommand{\sq}[1]{\left[ {#1} \right]}
\newcommand{\cut}[1]{}
\newcommand{\trans}{^{\textsf{T}}}
\newcommand{\ocm}{\hspace{1cm}}
\newcommand{\xhess}[1]{{\cal{H}}_{#1}}
\newcommand{\gn}[1]{{\cal{G}}_{#1}}
\newcommand{\qn}[1]{{\cal{Q}}_{#1}}
\newcommand{\fn}[1]{{\cal{F}}_{#1}}
\newcommand{\gradient}{g}
\newcommand{\whess}{H}
\newcommand{\pre}{{h}}
\newcommand{\act}{{a}}
\renewcommand{\x}{{x}}
\renewcommand{\y}{{y}}
\newcommand{\nonl}{{f}}
\newcommand{\fullh}{H}
\newcommand{\W}{W}
\newcommand{\diag}[1]{\text{diag}\left({#1}\right)}
\newcommand{\Wlab}{\W^\lambda_{a,b}}
\newcommand{\Wlcd}{\W^\lambda_{c,d}}
\newcommand{\prea}{{pre-activation}\xspace}
\newcommand{\transfer}{transfer function\xspace}
\newcommand{\transfers}{transfer functions\xspace}
\newcommand{\params}{{{\theta}}}
\newcommand{\Real}{\mathbb{R}}
\newcommand{\pmodel}{p}
\newcommand{\ssind}[2]{{#1}_{{#2}}}
\newcommand{\J}[2]{{J^{#1}_{#2}}}
\newcommand{\Jth}{\J{\pre_L}{\params}}
\newcommand{\loggrad}[2]{\nabla_{#1} \log p_\params(#2)}
\newcommand{\loggradc}[1]{\loggrad{#1}{\y|\x}}
\newcommand{\Clk}{{C_\lambda^k}}
\newcommand{\Clpk}{{C_{\lambda+1}^k}}
\newcommand{\wtl}[1]{\widetilde{#1}_\lambda}
\DeclareMathOperator{\trace}{trace}
\let\oldbibitem\bibitem
\def\bibitem{\vfill\oldbibitem}
\icmltitlerunning{Practical Gauss-Newton Optimisation for Deep Learning}
\begin{document}
	
\twocolumn[
\icmltitle{Practical Gauss-Newton Optimisation for Deep Learning}

\begin{icmlauthorlist}
\icmlauthor{Aleksandar Botev}{ucl}
\icmlauthor{Hippolyt Ritter}{ucl}
\icmlauthor{David Barber}{ucl,ati}

\end{icmlauthorlist}

\icmlaffiliation{ucl}{University College London, London, United Kingdom}
\icmlaffiliation{ati}{Alan Turing Institute, London, United Kingdom}

\icmlcorrespondingauthor{Aleksandar Botev}{a.botev@cs.ucl.ac.uk}

\icmlkeywords{}

\vskip 0.3in
]

\printAffiliationsAndNotice{}

\begin{abstract}
We present an efficient block-diagonal approximation to the Gauss-Newton matrix for feedforward neural networks. Our resulting algorithm is competitive against state-of-the-art first-order optimisation methods, with sometimes significant improvement in optimisation performance. Unlike first-order methods, for which hyperparameter tuning of the optimisation parameters is often a laborious process, our approach can provide good performance even when used with default settings. A side result of our work is that for piecewise linear \transfers, the network objective function can have no differentiable local maxima, which may partially explain why such \transfers facilitate effective optimisation.
\end{abstract}

\section{Introduction}

First-order optimisation methods are the current workhorse for training neural networks.
They are easy to implement with modern automatic differentiation frameworks, scale to large models and datasets and can handle noisy gradients such as encountered in the typical mini-batch setting \citep{momentum_org, nag, adam, adagrad, adadelta}.
However, a suitable initial learning rate and decay schedule need to be selected in order for them to converge both rapidly and towards a good local minimum.
In practice, this usually means many separate runs of training with different settings of those hyperparameters, requiring access to either ample compute resources or plenty of time.
Furthermore, pure stochastic gradient descent often struggles to escape `valleys' in the error surface with largely varying magnitudes of curvature, as the first derivative does not capture this information \citep{saddle_points,hessian-free-rnn}.
Modern alternatives, such as ADAM \citep{adam}, combine the gradients at the current setting of the parameters with various heuristic estimates of the curvature from previous gradients.

Second-order methods, on the other hand, perform updates of the form $\delta=\fullh^{-1} \gradient$, where $\fullh$ is the Hessian or some approximation thereof and $\gradient$ is the gradient of the error function.
Using curvature information  enables such methods to make more progress per step than techniques relying solely on the gradient.
Unfortunately, for modern neural networks, explicit calculation and storage of the Hessian matrix is infeasible. Nevertheless, it is possible to efficiently calculate Hessian-vector products $\fullh \gradient$ by use of extended Automatic Differentiation \citep{fastmvp,rop}; the linear system $\gradient=\fullh v$ can then be solved for $v$, \emph{e.g.} by using conjugate gradients \citep{hessian-free-deep, hessian-free-rnn}. Whilst this can be effective, the number of iterations required makes this process uncompetitive against simpler first-order methods \citep{ontheimportance}.

In this work, we make the following contributions:
\begin{itemize}[style=unboxed,leftmargin=0cm,noitemsep,topsep=0pt]
\item[--] We develop a recursive block-diagonal approximation of the Hessian, where each block corresponds to the weights in a single feedforward layer.
These blocks are Kronecker factored and can be efficiently computed and inverted in a single backward pass.
\item[--] As a corollary of our recursive calculation of the Hessian, we note that for networks with piecewise linear \transfers the error surface has no differentiable strict local maxima.
\item[--] We discuss the relation of our method to KFAC \citep{kfac}, a block-diagonal approximation to the Fisher matrix.
KFAC is less generally applicable since it requires the network to define a probabilistic model on its output. Furthermore, for non-exponential family models, the Gauss-Newton and Fisher approaches are in general different. 
\item[--]  On three standard benchmarks we demonstrate that (without tuning) second-order methods perform competitively, even against well-tuned state-of-the-art first-order methods.
\end{itemize}


\section{Properties of the Hessian}
\label{background}

As a basis for our approximations to the Gauss-Newton matrix, we first describe how the diagonal Hessian blocks of feedforward networks can be recursively calculated. Full derivations are given in the supplementary material.

\subsection{Feedforward Neural Networks}
\label{sec:intro}

A feedforward neural network takes an input vector $\act_0=\x$ and produces an output vector $\pre_L$  on the final $(L^{th})$ layer of the network:
\beq
\pre_\lambda = \W_\lambda \act_{\lambda-1}; \quad \act_\lambda = \nonl_\lambda(\pre_\lambda) \ocm \; 1 \le \lambda < L 
\label{eq:nn}
\eeq
where $\pre_\lambda$ is the pre-activation in layer $\lambda$ and $\act_\lambda$ are the activation values; $\W_\lambda$ is the matrix of weights and $\nonl_\lambda$ the elementwise \transfer\footnote{The usual bias ${b}_\lambda$ in the equation for $\pre_\lambda$ is absorbed into $\W_\lambda$ by appending a unit term to every $\act_{\lambda-1}$.}. We define a loss $E(\pre_L,\y)$ between the output $\pre_L$ and a desired training output $\y$ (for example squared loss $(\pre_L-\y)^2$) which is a function of all parameters of the network $\params = \sq{\vect{\W_1}\trans, \vect{\W_2}\trans, \dots, \vect{\W_L}\trans}\trans$. For a training dataset with empirical distribution $p(\x,\y)$, the total error function is then defined as the expected loss $\bar{E}(\params)= \ave{E}_{p(\x,\y)}$. For simplicity we denote by $E(\params)$ the loss for a generic single datapoint $(\x,\y)$.


\subsection{The Hessian}

A central quantity of interest in this work is the parameter Hessian, $\fullh$, which has elements:
\beq
\sq{\fullh}_{ij} = \pdiff{\params_i}{\params_j}E(\params)
\eeq
The expected parameter Hessian is similarly given by the expectation of this equation.
To emphasise the distinction between the expected Hessian and the Hessian for a single datapoint $(\x,\y)$, we also refer to the single datapoint Hessian as the \emph{sample} Hessian.

\subsubsection{Block Diagonal Hessian}

The full Hessian, even of a moderately sized neural network, is computationally intractable due to the large number of parameters.
Nevertheless, as we will show, blocks of the sample Hessian can be computed efficiently. 
Each block corresponds to the second derivative with respect to the parameters $\W_\lambda$ of a single layer $\lambda$.
We focus on these blocks since the Hessian is in practice typically block-diagonal dominant \citep{kfac}. 


The gradient of the error function with respect to the weights of layer $\lambda$ can be computed by recursively applying the chain rule:\footnote{Generally we use a Greek letter to indicate a layer and a Roman letter to denote an element within a layer. We use either sub- or super-scripts wherever most notationally convenient and compact. }
\beq
\dE{\Wlab}
= \sum_i \deriv{\pre^\lambda_i}{\Wlab} \dE{\pre^\lambda_i}
=  \act^{\lambda-1}_b \dE{\pre^\lambda_a}
\eeq
Differentiating again we find that the sample Hessian for layer $\lambda$ is:
\begin{align}
\sq{\whess_\lambda}_{(a,b),(c,d)}
&\equiv \hE{\Wlab}{\Wlcd} \\
&= \act^{\lambda-1}_b \act^{\lambda-1}_d \sq{\xhess{\lambda}}_{a,c}
\label{eq:hess:derivation}
\end{align}
where  we define the \emph{\prea} Hessian for layer $\lambda$ as:
\beq
\label{eq:notations}
\sq{\xhess{\lambda}}_{a,b} = \hE{\pre^\lambda_a}{\pre^\lambda_b}
\eeq
We can re-express \eref{eq:hess:derivation} in matrix form for the sample Hessian of $\W_\lambda$:
\begin{equation}
\whess_\lambda = \hE{\vect{\ssind{\W}{\lambda}}}{\vect{\ssind{\W}{\lambda}}} = \br{\ssind{\act}{\lambda-1} {\ssind{\act}{\lambda-1}\trans}}\otimes \xhess{\lambda}
\label{eq:hess:kron}
\end{equation}
where  $\otimes$ denotes the Kronecker product\footnote{Using the notation $\{ \cdot \}_{i,j}$ as the $i,j$ matrix block, the Kronecker Product is defined as $
\{\m{A}\otimes\m{B}\}_{i,j} = a_{ij}\m{B}$.}.

\iftoggle{show_main_figs}{
\begin{figure*}[!ht]
	\begin{center}
		\begin{subfigure}[]{0.3\textwidth}
			\includegraphics[width=\textwidth]{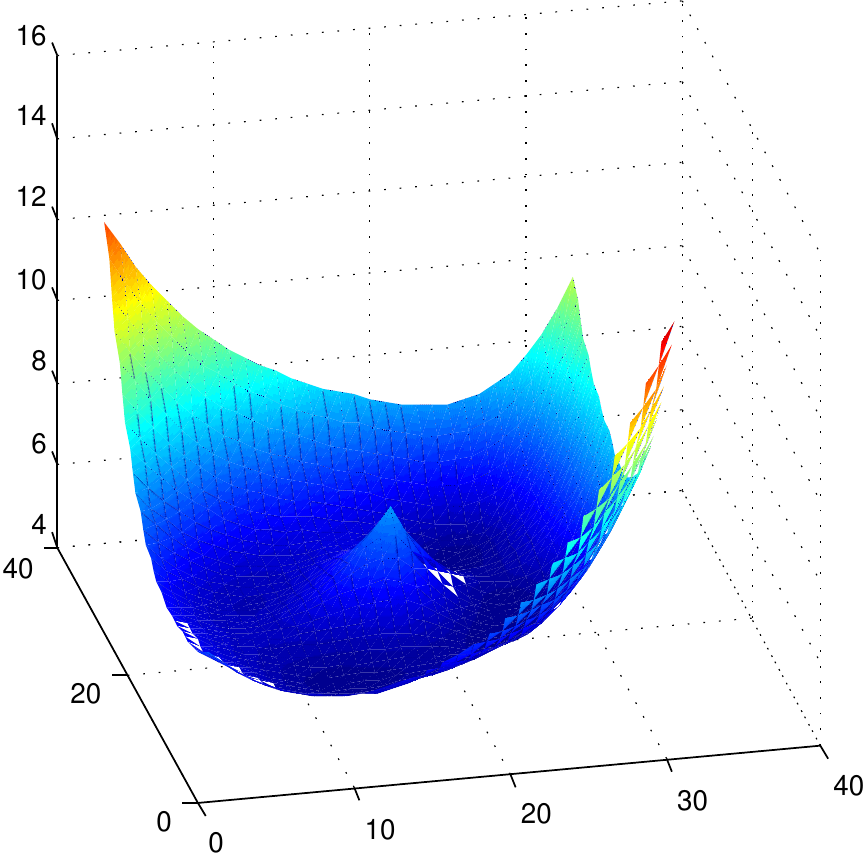}
			\caption{}
		\end{subfigure}%
		~
		\begin{subfigure}[]{0.3\textwidth}
			\includegraphics[width=\textwidth]{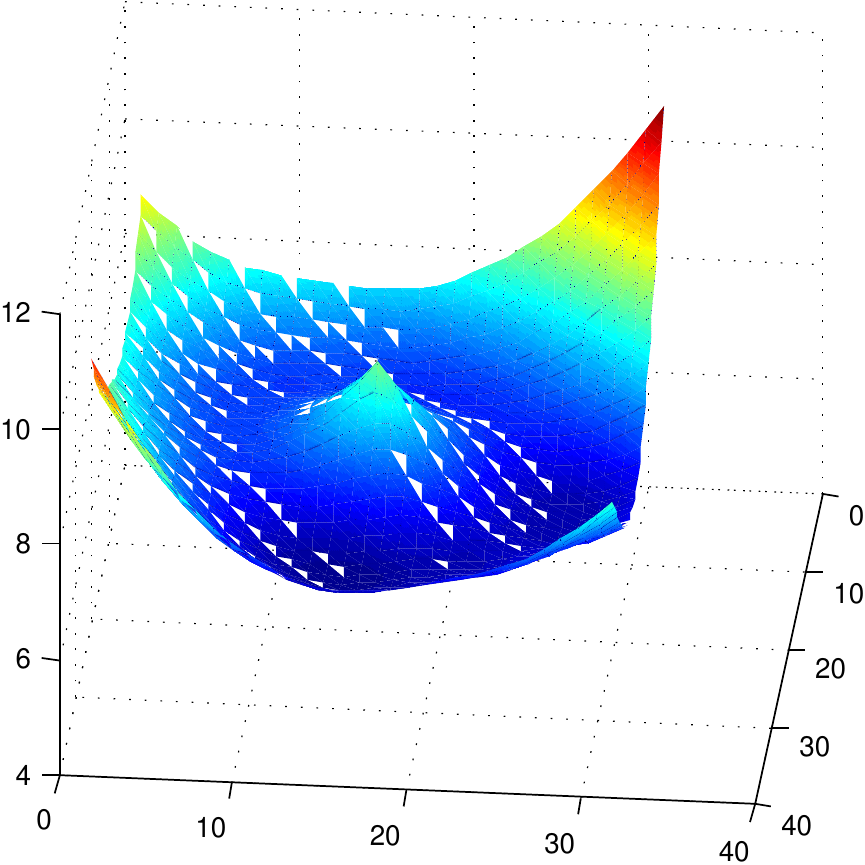}
			\caption{}
		\end{subfigure}%
		~
		\begin{subfigure}[]{0.3\textwidth}
			\includegraphics[width=\textwidth]{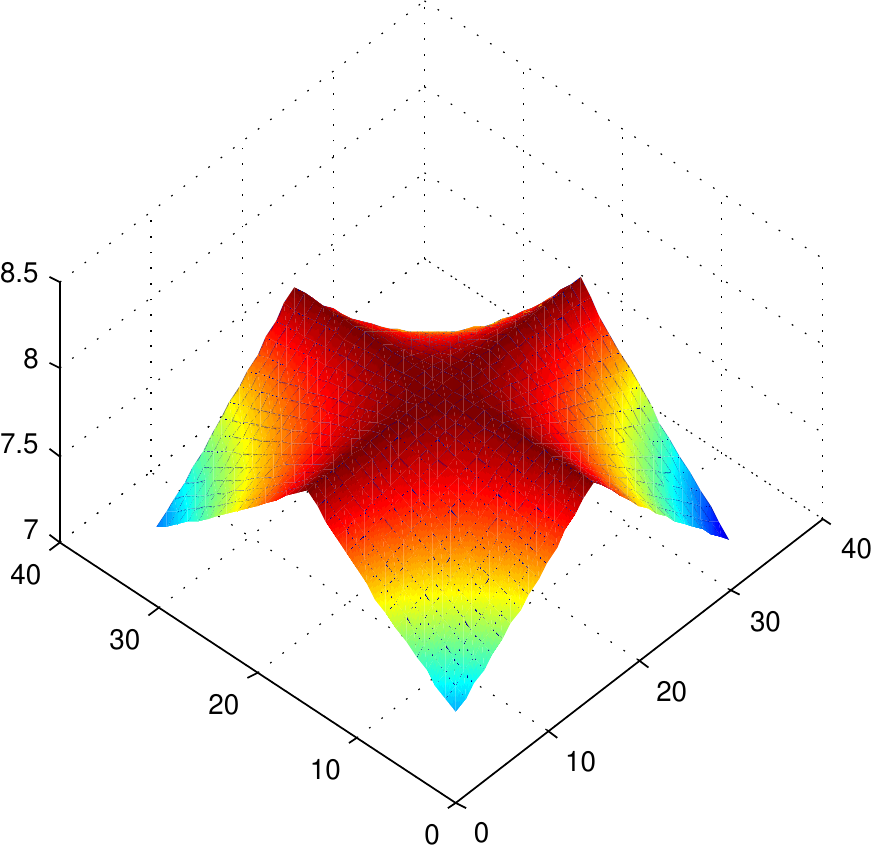}
			\caption{}
		\end{subfigure}
	\end{center} \vspace{-1\baselineskip}%
	\caption{Two layer network with ReLU and square loss. (a) The objective function $E$ as we vary $\W_1(x,y)$ along two randomly chosen direction matrices $U$ and $V$, giving $\W_1(x,y)=x U+yV$, $(x,y)\in\Real^2$. (b) $E$ as a function of two randomly chosen directions within $\W_2$. (c) $E$ for varying jointly $\W_1=xU$, $\W_2=yV$. The surfaces contain no smooth local maxima.}
	\label{fig:error:surface}
\end{figure*}
}

\subsubsection{Block Hessian Recursion}
\label{sec:block:hess}

In order to calculate the sample Hessian, we need to evaluate the \prea Hessian first. This can be computed recursively as (see \aref{app:block:hess}):
\beq
\xhess{\lambda} =  B_{\lambda} \W_{\lambda+1}\trans \xhess{\lambda+1} \W_{\lambda+1} B_{\lambda}  + D_{\lambda} 
\label{eq:hess:recursion}
\eeq
where we define the diagonal matrices:
\begin{align}
B_\lambda &= \diag{\nonl_\lambda'(\pre_\lambda)}\\
D_\lambda &= \diag{\nonl_\lambda''(\pre_\lambda) \dE{\act_\lambda}}
\label{eq:hes:d_def}
\end{align}
and $\nonl_\lambda'$ and $\nonl_\lambda''$ are the first and second derivatives of $\nonl_\lambda$ respectively. 

The recursion is initialised with $\xhess{L}$, which depends on the objective function $E(\params)$ and is easily calculated analytically for the usual objectives\footnote{For example for squared loss $(\y-\pre_L)^2/2$, the \prea Hessian is simply the identity matrix $\xhess{L}=I$.}.
Then we can simply apply the recursion  \eref{eq:hess:recursion} and compute the \prea Hessian for each layer using a single backward pass through the network.
A similar observation is given in \citep{pesky}, but restricted to the diagonal entries of the Hessian rather than the more general block-diagonal case.
Given the \prea Hessian, the Hessian of the parameters for a layer is given by \eref{eq:hess:kron}.
For more than a single datapoint, the recursion is applied per datapoint and the parameter Hessian is given by the average of the individual sample Hessians.

\subsubsection{No Differentiable Local Maxima}
\label{sec:pltf}

In recent years piecewise linear \transfers, such as the ReLU function$f(x) = \max(x, 0)$, have become popular\footnote{Note that, for piecewise linear $f$, $E$ is not necessarily piecewise linear in $\theta$.}.
%
Since the second derivative $f''$ of a piecewise linear function is zero everywhere, the matrices $D_\lambda$ in \eref{eq:hess:recursion} will be zero (away from non-differentiable points).

It follows that if $\xhess{L}$ is Positive Semi-Definite (PSD), which is the case for the most commonly used loss functions, the \prea matrices are PSD for every layer. 
%
A corollary is that if we fix all of the parameters of the network except for $\W_\lambda$ the objective function is locally convex with respect to $\W_\lambda$ wherever it is twice differentiable.
Hence, there can be no local maxima or saddle points of the objective with respect to the parameters within a layer\footnote{This excludes any pathological regions where the objective function has zero curvature.}.
Note that this does not imply that the objective is convex everywhere with respect to $\W_\lambda$ as the surface will contain ridges along which it is not differentiable, corresponding to boundary points where the \transfer changes regimes, see \fref{fig:error:surface}(c).

As the trace of the full Hessian $\fullh$ is the sum of the traces of the diagonal blocks, it must be non-negative and thus it is not possible for all eigenvalues to be simultaneously negative. This implies that for feedforward neural networks with piecewise linear \transfers there can be no differentiable local maxima - that is, outside of pathological constant regions, all maxima (with respect to the full parameter set $\theta$) must lie at the boundary points of the nonlinear activations and be `sharp', see \fref{fig:error:surface}. Additionally, for \transfers with zero gradient $f'=0$, $\xhess{\lambda}$ will have lower rank than  $\xhess{\lambda+1}$, reducing the curvature information propagating from the output layer back up the network. This suggests that it is advantageous to use piecewise linear \transfers with non-zero gradients, such as $\max(0.1x,x)$. 

We state and prove these results more formally in \aref{app:locmax}.

\section{Approximate Gauss-Newton Method\label{sec:gn:approx}}


Besides being intractable for large neural networks, the Hessian is not guaranteed to be PSD. A Newton update $\fullh^{-1}g$ could therefore lead to an increase in the error. A common PSD approximation to the Hessian is the Gauss-Newton (GN) matrix.
For an error $E(\pre^L(\params))$, the sample Hessian is given by:
\beq
\mhes{E}{\params_i}{\params_j} = \sum_k \dE{\pre^L_k} \mhes{\pre^L_k}{\params_i}{\params_j}+\sum_{k,l} \deriv{\pre^L_k}{\params_i} \hE{\pre^L_k}{\pre^L_l} \deriv{\pre^L_l}{\params_j}
\label{eq:gg:general}
\eeq
Assuming that $\xhess{L}$ is PSD, the GN method forms a PSD approximation by neglecting the first term in \eref{eq:gg:general}. 
This can be written in matrix notation as:
\beq
G \equiv {\Jth} \trans \xhess{L} \Jth
\label{eq:gn:sample}
\eeq
where $\J{\pre_L}{\params}$ is the Jacobian of the network outputs with respect to the parameters. The expected GN matrix is the average of \eref{eq:gn:sample} over the datapoints:
\beq
\bar{G} \equiv \ave{{\Jth} \trans \xhess{L} \Jth}_{p(\x,\y)}
\label{eq:gn:expect}
\eeq

Whilst \eref{eq:gn:expect} shows how to calculate the GN matrix exactly, in practice we cannot feasibly store the matrix in this raw form.  To proceed, similar to the Hessian, we will make a block diagonal approximation. As we will show, as for the Hessian itself, even a block diagonal approximation is computationally infeasible, and additional approximations are required. Before embarking on this sequence of approximations, we first show that the GN matrix can be expressed as the expectation of a Khatri-Rao product, \emph{i.e.} blocks of Kronecker products, corresponding to the weights of each layer. We will subsequently approximate the expectation of the Kronecker products as the product of the expectations of the factors, making the blocks efficiently invertible.

\subsection{The GN Matrix as a Khatri-Rao Product}

Using the definition of $\bar{G}$ in \eref{eq:gn:expect} and the chain rule, we can write the block of the matrix corresponding to the parameters in layers $\lambda$ and $\beta$ as:
\begin{align}
\bar{G}_{\lambda, \beta}
&= \ave{\J{\pre_\lambda}{\W_\lambda} \trans \J{\pre_L}{\pre_\lambda} \trans \xhess{L} \J{\pre_L}{\pre_\beta}  \J{\pre_\beta}{\W_\beta}}
\end{align}
where $[\J{\pre_L}{\pre_\lambda}]_{i,k}\equiv\deriv{\pre^L_k}{\pre^\lambda_i}$. Defining $\gn{\lambda, \beta}$ as the \prea  GN matrix between the $\lambda$ and $\beta$ layers' pre-activation vectors:
\beq
\gn{\lambda, \beta}  = \J{\pre_L}{\pre_\lambda} \trans \xhess{L} \J{\pre_L}{\pre_\beta}
\eeq
%
and using the fact that $\J{\pre_\lambda}{W_\lambda} = \act_{\lambda-1} \trans \otimes I$ we obtain
\beq
\bar{G}_{\lambda, \beta} = \ave{\br{ \act_{\lambda-1} \act_{\beta-1} \trans} \otimes \gn{\lambda, \beta}}
\label{eq:gn:blocks}
\eeq
We can therefore write the GN matrix as the expectation of the Khatri-Rao product: 
\beq
\bar{G} = \ave{ \qn{} \star \gn{}}
\label{eq:gn:khatrirao} 
\eeq
where the blocks of $\gn{}$ are the \prea GN matrices $\gn{\lambda,\beta}$ as defined in \eref{eq:gn:blocks}, and the blocks of $\qn{}$ are:
\beq
\qn{\lambda, \beta} \equiv{\act_{\lambda-1} \act_{\beta-1} \trans}
\eeq

\subsection{Approximating the GN Diagonal Blocks}

For simplicity, from here on we denote by $G_\lambda$ the diagonal blocks of the \emph{sample} GN matrix with respect to the weights of layer $\lambda$ (dropping the duplicate index). Similarly, we drop the index for the diagonal blocks $\qn{\lambda}$ and $\gn{\lambda}$ of the corresponding matrices in \eref{eq:gn:khatrirao}, giving more compactly:
\beq
G_\lambda =  \qn{\lambda} \otimes \gn{\lambda}
\eeq
The diagonal blocks of the expected GN $\bar{G}_\lambda$ are then given by $\ave{G_\lambda}$.
Computing this requires evaluating a block diagonal matrix for each datapoint and accumulating the result.
However, since the expectation of a Kronecker product is not necessarily Kronecker factored, one would need to explicitly store the whole matrix $\bar{G}_\lambda$ to perform this accumulation. With $D$ being the dimensionality of a layer, this matrix would have $O(D^4)$ elements. For $D$ of the order of $1000$, it would require several terabytes of memory to store $\bar{G}_\lambda$. As this is prohibitively large, we seek an approximation for the diagonal blocks that is both efficient to compute and store.
The approach we take is the {factorised} approximation:
\beq
\ave{G_\lambda}\approx \ave{\qn{\lambda}} \otimes \ave{\gn{\lambda}}\label{eq:gn:approx}
\eeq
Under this factorisation, the updates for each layer can be computed efficiently by solving a Kronecker product form linear system -- see the supplementary material.
%
The first factor $\ave{\qn{\lambda}}$ is simply the uncentered covariance of the activations:
\beq
\ave{\qn{\lambda}} = \frac{1}{N}A_{\lambda-1} A_{\lambda-1}\trans
\label{eq:q2}
\eeq
where the $n^{th}$ column of the $d \times n$ matrix $A_{\lambda-1}$ is the set of activations of layer $\lambda-1$ for datapoint $n$. The second factor $\ave{\gn{\lambda}}$, can be computed efficiently, as described below.

\subsection{The Pre-Activation Recursion}
\label{sec:block_recursion}

Analogously to the block diagonal \prea Hessian recursion \eref{eq:hess:recursion}, a similar recursion can be derived for the \prea GN matrix diagonal blocks:
%
\begin{align}
\gn{\lambda}
&= B_\lambda \W_{\lambda+1}\trans \gn{\lambda+1} \W_{\lambda+1} B_\lambda
\label{eq:gn:recursion}
\end{align}
where the recursion is initialised with the Hessian of the output $\xhess{L}$.


This highlights the close relationship between the \prea Hessian recursion and the \prea GN recursion. Inspecting \eref{eq:hess:recursion} and \eref{eq:gn:recursion} we notice that the only difference in the recursion stems from terms containing the diagonal matrices $D_\lambda$. 
From  \eref{eq:hess:kron} and \eref{eq:gn:blocks} it follows that in the case of piecewise linear \transfers, the diagonal blocks of the Hessian are equal to the diagonal blocks of the GN matrix\footnote{This holds only at points where the derivative exists.}.

Whilst this shows how to calculate the sample \prea GN blocks efficiently, from \eref{eq:gn:approx} we require the calculation of the \emph{expected} blocks $\ave{\gn{\lambda}}$.
In principle, the recursion could be applied for every data point.
However, this is impractical in terms of the computation time and a vectorised implementation would impose infeasible memory requirements.
Below, we show that when the number of outputs is small, it is in fact possible to efficiently compute the exact expected \prea GN matrix diagonals. For the case of a large number of outputs, we describe a further approximation to $\ave{\gn{\lambda}}$ in \sref{sec:rec}.

\subsection{Exact Low Rank Calculation of $\ave{\gn{\lambda}}$}
\label{sec:low_rank}
Many problems in classification and regression deal with a relatively small number of outputs. This implies that the rank $K$ of the output layer GN matrix  $\gn{L}$ is low. We use the square root representation:
%
%
\beq
\gn{\lambda} = \sum_{k=1}^K \Clk\Clk \trans 
\eeq
From \eref{eq:gn:recursion} we then obtain the recursion:
%
%
\beq
\Clk = B_{\lambda} \W_{\lambda+1}\trans \Clpk
\eeq
This allows us to calculate the expectation as:
\beq
\ave{\gn{\lambda}}
= \ave{\sum_k \Clk {\Clk} \trans}
=\frac{1}{N} \sum_k \tilde{C}_\lambda^k \br{\tilde{C}_\lambda^k} \trans
\eeq
where we stack the column vectors $\Clk$ for each datapoint into a matrix $\tilde{C}^\lambda_k$, analogous to \eref{eq:q2}.  Since we need to store only the vectors $\Clk$ per datapoint, this reduces the memory requirement to $K \times D \times N$; for small $K$ this is a computationally viable option.
We call this method Kronecker Factored Low Rank (KFLR).

\subsection{Recursive Approximation of $\ave{\gn{\lambda}}$\label{sec:rec}}

For higher dimensional outputs, \emph{e.g.} in autoencoders, rather than backpropagating a sample \prea GN matrix for every datapoint, we propose to simply pass the expected matrix through the network.
This yields the nested expectation approximation of \eref{eq:gn:recursion}:
\begin{align}
\ave{\gn{\lambda}}
&\approx \ave{B_\lambda \W_{\lambda+1} \trans \ave{\gn{\lambda+1}} \W_{\lambda+1} B_\lambda}
\end{align}
The recursion is initialised with the exact value $\ave{\gn{L}}$. 
The method will be referred to as Kronecker Factored Recursive Approximation (KFRA).

\section{Related Work}

Despite the prevalence of first-order methods for neural network optimisation, there has been considerable recent interest in developing practical second-order methods, which we briefly outline below.

\citet{hessian-free-deep} and  \citet{hessian-free-rnn} exploited the fact that full Gauss-Newton matrix-vector products can be computed efficiently using a form of automatic differentiation. This was used to approximately solve the linear system $\bar{G} \delta = \nabla f$ using conjugate gradients to find the parameter update $\delta$.
Despite making good progress on a per-iteration basis, having to run a conjugate gradient descent optimisation at every iteration proved too slow to compete with well-tuned first-order methods.


The closest related work to that presented here is the KFAC method \citep{kfac}, in which the Fisher matrix is used as the curvature matrix. This is based on the output $y$ of the network defining a conditional distribution $p_\params(y|x)$ on the observation $y$, with a loss defined as the KL-divergence between the empirical distribution $p(y|x)$ and the network output distribution.
The network weights are chosen to minimise the KL-divergence between the conditional output distribution and the data distribution.
For example, defining the network output as the mean of a fixed variance Gaussian or a Bernoulli/Categorical distribution yields the common squared error and cross-entropy objectives respectively.

Analogously to our work,  \citet{kfac} develop a block-diagonal approximation to the Fisher matrix. 
The Fisher matrix is another PSD approximation to the Hessian that is used in natural gradient descent \citep{natural_grad}. 
In general, the Fisher and GN matrices are different. However, for the case of $p_\params(y|x)$ defining an exponential family distribution, the Fisher and GN matrices are equivalent, see \aref{fisher_kfac}. As in our work, \citet{kfac} use a factorised approximation of the form \eref{eq:gn:approx}. However, they subsequently approximate the expected Fisher blocks by drawing Monte Carlo samples of the gradients from the conditional distribution defined by the neural network.  As a result, KFAC is always an approximation to the GN \prea matrix, whereas our method can provide an exact calculation of $\ave{\gn{}}$ in the low rank setting. See also \aref{app:kfac:differences} for differences between our KFRA approximation and KFAC.

More generally, our method does not require any probabilistic model interpretation and is therefore more widely applicable than KFAC.


\section{Experiments}

We performed experiments\footnote{Experiments were run on a workstation with a Titan Xp GPU and an Intel Xeon CPU E5-2620 v4 @ 2.10GHz.} training deep autoencoders on three standard grey-scale image datasets and classifying hand-written digits as odd or even.
The datasets are:

\begin{description}
\item[MNIST] consists of $60,000$ $28\times28$ images of hand-written digits. We used only the first $50,000$ images for training (since the remaining $10,000$ are usually used for validation). 
\item[CURVES] contains $20,000$ training images of size $28\times28$ pixels of simulated hand-drawn curves, created by choosing three random points in the $28\times28$ pixel plane (see the supplementary material of \citep{hinton_arch} for details).
\item[FACES] is an augmented version of the Olivetti faces dataset \citep{olivetti} with $10$ different images of $40$ people.
We follow \citep{hinton_arch} in creating a training set of $103,500$ images by choosing $414$ random pairs of rotation angles ($-90$ to $90$ degrees) and scaling factors ($1.4$ to $1.8$) for each of the $250$ images for the first $25$ people and then subsampling to $25\times25$ pixels.
\end{description}

We tested the performance of second-order against first-order methods and compared the quality of the different GN approximations.
In all experiments we report only the training error, as we are interested in the performance of the optimiser rather than how the models generalise.

When using second-order methods, it is important in practice to adjust the unmodified update $\delta$ in order to dampen potentially over-confident updates.
One of our central interests is to compare our approach against KFAC.
We therefore followed \citep{kfac} as closely as possible, introducing damping in an analogous way.
Details on the implementation are in \aref{app:imp}.
We emphasise that throughout all experiments we used the default damping parameter settings, with no tweaking required to obtain acceptable performance\footnote{Our damping parameters could be compared to the exponential decay parameters $\beta_1$ and $\beta_2$ in ADAM, which are typically left at their recommended default values.}.

Additionally, as a form of momentum for the second-order methods, we compared the use of a moving average with a factor of $0.9$ on the curvature matrices $\gn{\lambda}$ and $\qn{\lambda}$ to only estimating them from the current minibatch.
We did not find any benefit in using momentum on the updates themselves; on the contrary this made the optimisation unstable and required clipping the updates. We therefore do not include momentum on the updates in our results. 

All of the autoencoder architectures are inspired by \citep{hinton_arch}.
The layer sizes are $D$-$1000$-$500$-$250$-$30$-$250$-$500$-$1000$-$D$, where $D$ is the dimensionality of the input.
The grey-scale values are interpreted as the mean parameter of a Bernoulli distribution and the loss is the binary cross-entropy on CURVES and MNIST, and square error on FACES.

\subsection{Comparison to First-Order Methods}\label{sec:first:order:cmp}

\iftoggle{show_main_figs}{
{
\begin{figure*}[ht!]
    \centering
    \begin{subfigure}[b]{\linewidth}
        \centering
        \includegraphics[width=\textwidth]{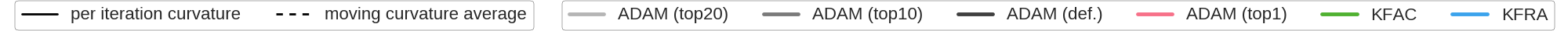}    
    \end{subfigure}%
    \\
    \begin{subfigure}[b]{0.33\linewidth}
        \centering
        \includegraphics[width=\textwidth]{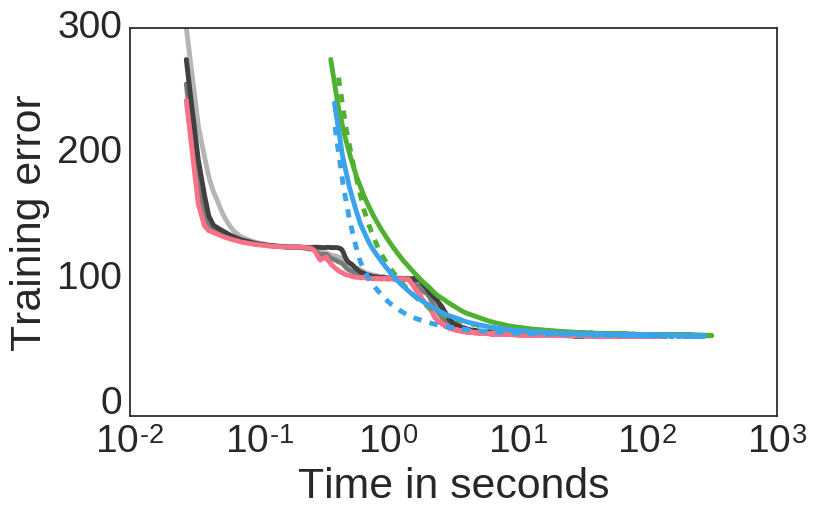}
        \label{fig:cmp:gpu:curves}
    \end{subfigure}%
    ~
    \begin{subfigure}[b]{0.33\linewidth}
        \centering
        \includegraphics[width=\textwidth]{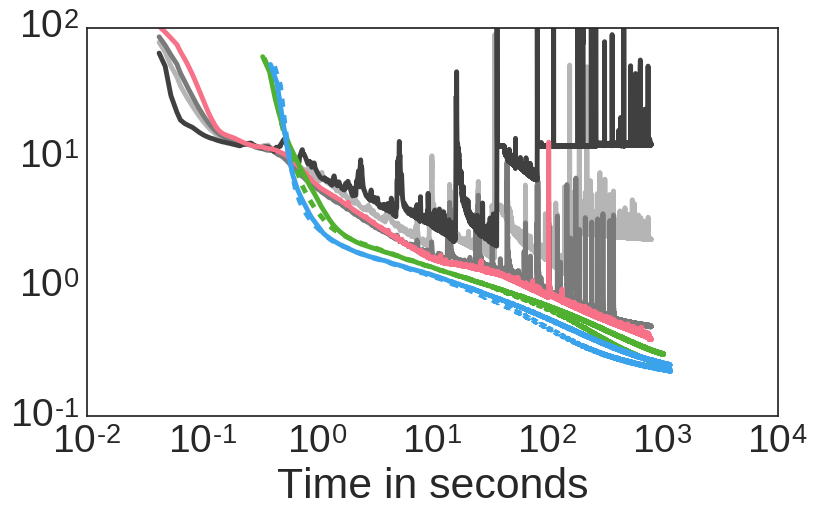}
        \label{fig:cmp:gpu:faces}
    \end{subfigure}%
    ~
    \begin{subfigure}[b]{0.33\linewidth}
        \centering
        \includegraphics[width=\textwidth]{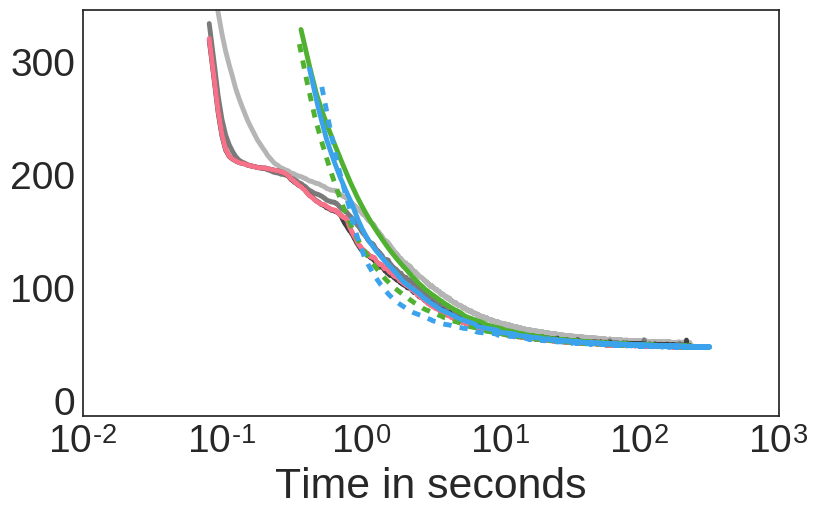}
        \label{fig:cmp:gpu:mnist}
    \end{subfigure} \vspace{-2\baselineskip}%
    \\%
    \begin{subfigure}[b]{0.33\linewidth}
        \centering
        \includegraphics[width=\textwidth]{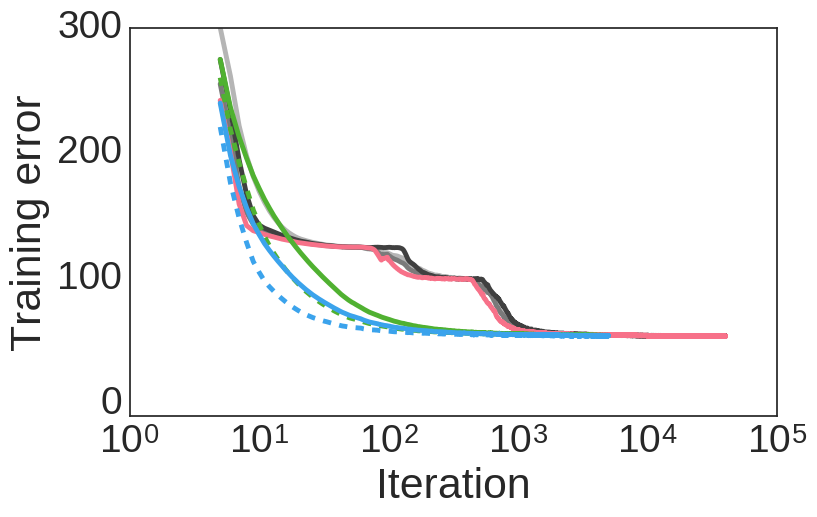}
        \caption{CURVES}
        \label{fig:cmp:iter:curves}
    \end{subfigure}%
    ~
    \begin{subfigure}[b]{0.33\linewidth}
        \centering
        \includegraphics[width=\textwidth]{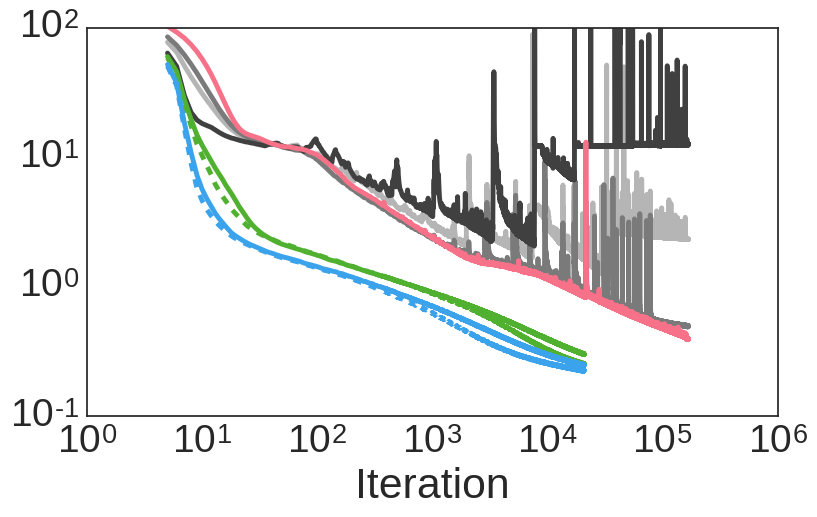}
        \caption{FACES}
        \label{fig:cmp:iter:faces}
    \end{subfigure}%
    ~
    \begin{subfigure}[b]{0.33\linewidth}
        \centering
        \includegraphics[width=\textwidth]{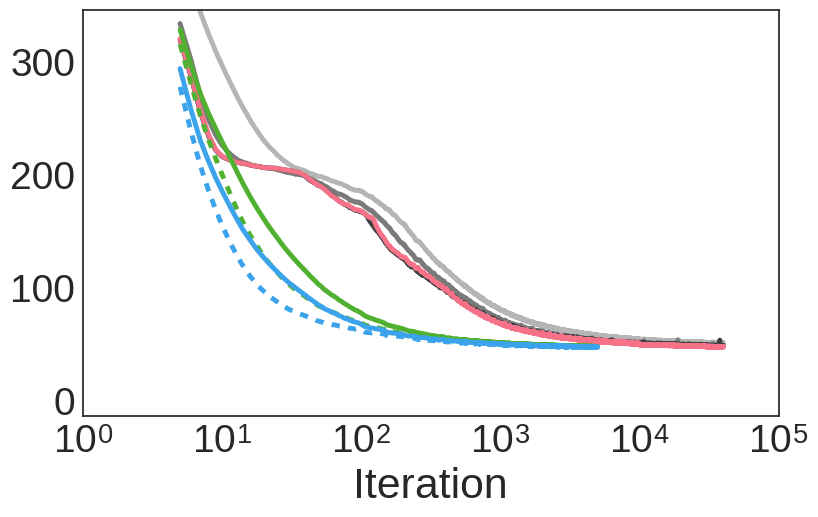}
        \caption{MNIST}
        \label{fig:cmp:iter:mnist}
    \end{subfigure} \vspace{-1.5\baselineskip}%
    \caption{Comparison of the objective function being optimised by KFRA, KFAC and ADAM on CURVES, FACES and MNIST. GPU benchmarks are in the first row, progress per update in the second. The dashed line indicates the use of momentum on the curvature matrix for the second-order methods. Errors are averaged using a sliding window of ten.}
    \label{fig:cmp}
\end{figure*}
}
}

We investigated the performance of both KFRA and KFAC compared to popular first-order methods.
Four of the most prevalent gradient-based optimisers were considered -- Stochastic Gradient Descent, Nesterov Accelerated Gradient, Momentum and ADAM \citep{adam}.
A common practice when using first-order methods is to decrease the learning rate throughout the training procedure.
For this reason we included an extra parameter $T$ -- the decay period -- to each of the methods, halving the learning rate every $T$ iterations.
To find the best first-order method, we ran a grid search over these two hyperarameters\footnote{We varied the learning rate from $2^{-6}$ to $2^{-13}$ at every power of $2$ and chose the decay period as one of $\{100\%, 50\%, 25\%, 12.5\%, 6.25\%\}$ of the number of updates.}.

Each first-order method was run for $40,000$ parameter updates for MNIST and CURVES and $160,000$ updates for FACES.
This resulted in a total of $35$ experiments and $1.4/5.6$ million updates for each dataset per method.
In contrast, the second-order methods did not require adjustment of any hyperparameters and were run for only $5,000/20,000$ updates, as they converged much faster\footnote{For fair comparison, all of the methods were implemented using Theano \citep{theano} and Lasagne \citep{lasagne}. }. 
For the first-order methods we found ADAM to outperform the others across the board and we consequently compared the second-order methods against ADAM only.

\fref{fig:cmp} shows the performance of the different optimisers on all three datasets.
We present progress both per parameter update, to demonstrate that the second-order optimisers effectively use the available curvature information, and per GPU wall clock time, as this is relevant when training a network in practice.
For ADAM, we display the performance using the default learning rate $10^{-3}$ as well as the top performing combination of learning rate and decay period.
To illustrate the sensitivity of ADAM to these hyperparameter settings (and how much can therefore be gained by parameter tuning) we also plot the average performance resulting from using the top 10 and top 20 settings.

Even after significantly tuning the ADAM learning rate and decay period, the second-order optimisers outperformed ADAM out-of-the-box across all three datasets.
In particular on the challenging FACES dataset, the optimisation was not only much faster when using second-order methods, but also more stable.
On this dataset, ADAM appears to be highly sensitive to the learning rate and in fact diverged when run with the default learning rate of $10^{-3}$.
In contrast to ADAM, the second-order optimisers did not get trapped in plateaus in which the error does not change significantly.

In comparison to KFAC, KFRA showed a noticeable speed-up in the optimisation both per-iteration and when measuring the wall clock time.
Their computational cost for each update is equivalent in practice, which we discuss in detail in \aref{app:diff_kfac_kfra}.
Thus, to validate that the advantage of KFRA over KFAC stems from the quality of its updates, we compared the alignment of the updates of each method with the exact Gauss-Newton update (using the slower Hessian-free approach; see \aref{app:updates} for the figures).
We found that KFRA tends to be more closely aligned with the exact Gauss-Newton update, which provides a possible explanation for its better performance.

\subsection{Non-Exponential Family Model}
\label{sec:exp:mix}

To compare our approximate Gauss-Newton method and KFAC in a setting where the Fisher and Gauss-Newton matrix are not equivalent, we designed an experiment in which the model distribution over $\y$ is not in the exponential family. The model is a mixture of two binary classifiers\footnote{In this context $\sigma(x) = (1 + \exp (-x))^{-1}$.}:
\beq
\begin{split}
p(\y|\pre_L) = \sigma(\pre^L_1) \sigma (\pre^L_2)^\y \sigma (-\pre^L_2)^{1-\y} + \\
(1 - \sigma(\pre^L_1)) \sigma (\pre^L_3)^\y \sigma (-\pre^L_3)^{1-\y}
\end{split}
\eeq
We used the same architecture as for the encoding layers of the autoencoders -- $D$-$1000$-$500$-$250$-$30$-$1$, where $D=784$ is the size of the input.
The task of the experiment was to classify MNIST digits as even or odd. 
Our choice was motivated by recent interest in neural network mixture models \citep{mixture1,mixture2,mixture3,mixture4}; our mixture model is also appropriate for testing the performance of KFLR. 
Training was run for $40,000$ updates for ADAM with a grid search as in \sref{sec:first:order:cmp}, and for $5,000$ updates for the second-order methods. 
The results are shown in \fref{fig:mnist:mix}.




For the CPU, both per iteration and wall clock time the second-order methods were faster than ADAM; on the GPU, however, ADAM was faster per wall clock time. The value of the objective function at the final parameter values was higher for second-order methods than for ADAM. However, it is important to keep in mind that all methods achieved a nearly perfect cross-entropy loss of around $10^{-8}$. 
When so close to the minimum we expect the gradients and curvature to be very small and potentially dominated by noise introduced from the mini-batch sampling.
Additionally, since the second-order methods invert the curvature, they are more prone to accumulating numerical errors than first-order methods, which may explain this behaviour close to a minimum.




Interestingly, KFAC performed almost identically to KFLR, despite the fact that KFLR computes the exact \prea Gauss-Newton matrix. 
This suggests that in the low-dimensional output setting, the benefits from using the exact low-rank calculation are diminished by the noise and the rather coarse factorised Kronecker approximation.

\section{Rank of the Empirical Curvature}

\iftoggle{show_main_figs}{
\begin{figure}[t!]
    \centering
    \includegraphics[width=\linewidth]{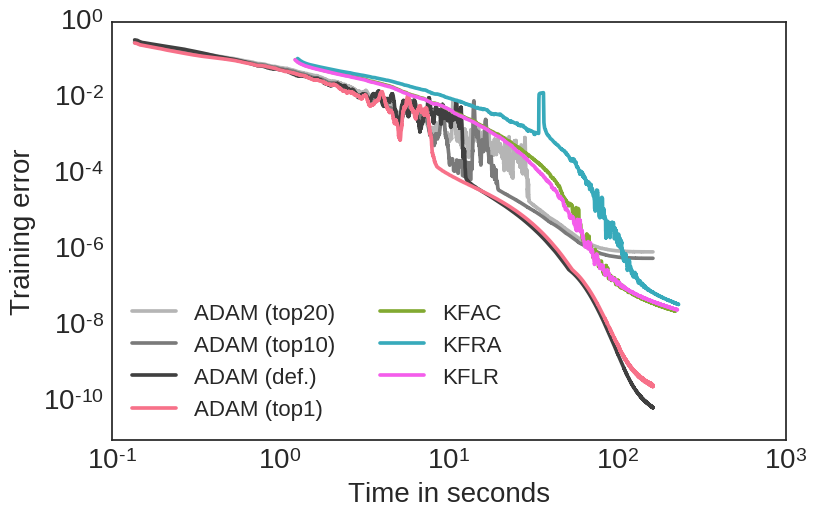} \vspace{-2\baselineskip}%
    \includegraphics[width=\linewidth]{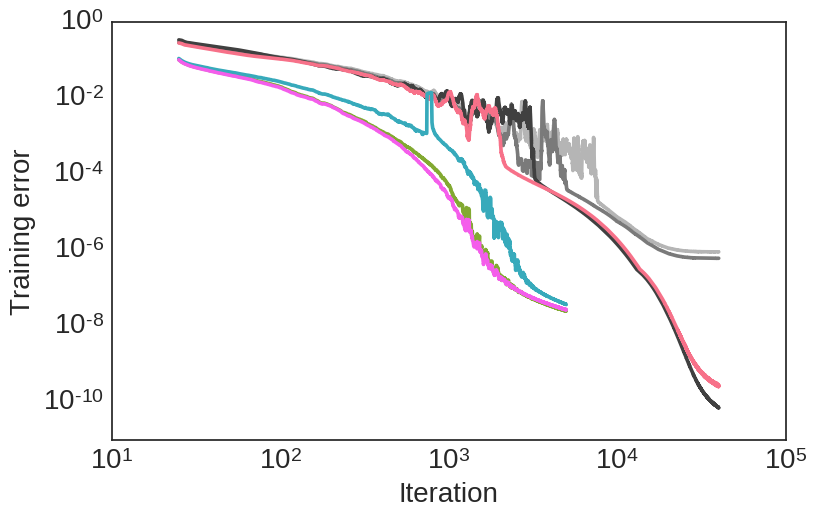} \vspace{0.1\baselineskip}
    \caption{Comparative optimisation performance on an MNIST binary mixture-classification model. We used momentum on the curvature matrix for all methods, as it stabilises convergence.}
    \label{fig:mnist:mix}
\end{figure}
}

The empirical success of second-order methods raises questions about the curvature of the error function of a neural network.
As we show in \aref{sec:gn_rank} the Monte Carlo Gauss-Newton matrix rank is upper bounded by the rank of the last layer Hessian times the size of the mini-batch.  More generally, the rank is upper bounded by the rank of $\xhess{L}$ times the size of the data set.
As modern neural networks commonly have millions of parameters, the exact Gauss-Newton matrix is usually severely under-determined.
This implies that the curvature will be zero in many directions.
This phenomenon is particularly pronounced for the binary classifier in \sref{sec:exp:mix}, where the rank of the output layer Hessian is one.

We can draw a parallel between the curvature being zero and standard techniques where the maximum likelihood problem is under-determined for small data sets.
This explains why damping is so important in such situations, and its role goes beyond simply improving the numerical stability of the algorithm. Our results suggest that, whilst in practice the Gauss-Newton matrix provides curvature only in a limited parameter subspace, this still provides enough information to allow for relatively large parameter updates compared to gradient descent, see \fref{fig:cmp}.

\section{Conclusion}

We presented a derivation of the block-diagonal structure of the Hessian matrix arising in feedforward neural networks. This leads directly to the interesting conclusion that for networks with piecewise linear \transfers and convex loss the objective has no differentiable local maxima. Furthermore, with respect to the parameters of a single layer, the objective has no differentiable saddle points. This may provide some partial insight into the success of such transfer functions in practice.


Since the Hessian is not guaranteed to be positive semi-definite, two common alternative curvature measures are the Fisher matrix and the Gauss-Newton matrix. Unfortunately, both are computationally infeasible and, similar to \citet{kfac}, we therefore used a block diagonal approximation, followed by a factorised Kronecker approximation. 
Despite parallels with the Fisher approach, formally the two methods are different.
Only in the special case of exponential family models are the Fisher and Gauss-Newton matrices equivalent; however, even for this case, the subsequent approximations used in the Fisher approach \citep{kfac} differ from ours.
Indeed, we showed that for problems in which the network has a small number of outputs no additional approximations are required.
Even on models where the Fisher and Gauss-Newton matrices are equivalent, our experimental results suggest that our KFRA approximation performs marginally better than KFAC.
As we demonstrated, this is possibly due to the updates of KFRA being more closely aligned with the exact Gauss-Newton updates than those of KFAC.

Over the past decade first-order methods have been predominant for Deep Learning. 
Second-order methods, such as Gauss-Newton, have largely been dismissed because of their seemingly prohibitive computational cost and potential instability introduced by using mini-batches.
Our results on comparing both the Fisher and Gauss-Newton approximate methods, in line with \cite{kfac}, confirm that second-order methods can perform admirably against even well-tuned state-of-the-art first-order approaches, while not requiring any hyperparameter tuning.

In terms of wall clock time on a CPU, in our experiments, the second-order approaches converged to the minimum significantly more quickly than state-of-the-art first-order methods.
When training on a GPU (as is common in practice), we also found that second-order methods can perform well, although the improvement over first-order methods was more marginal.
However, since second-order methods are much faster per update, there is the potential to further improve their practical utility by speeding up the most expensive computations, specifically solving linear systems on parallel compute devices.

\section*{Acknowledgements}

We thank the reviewers for their valuable feedback and suggestions.
We also thank Raza Habib, Harshil Shah and James Townsend for their feedback on earlier drafts of this paper.
Finally, we are grateful to James Martens for helpful discussions on the implementation of KFAC.

\newpage

\bibliographystyle{icml2017}

\newpage
\appendix
\onecolumn

%
%
%
%

\section{Derivation of the Block-Diagonal Hessian Recursion}
\label{app:block:hess}

The diagonal blocks of the \prea Hessian of a feedforward neural network can be related to each other via the recursion in \eref{eq:hess:recursion}. Starting from its definition in \eref{eq:notations} we can derive the recursion:
\begin{equation}
\begin{split}
\xhess{\lambda} &=  \deriv{}{\pre^\lambda_b} \dE{\pre^\lambda_a} = \deriv{}{\pre^\lambda_b} \sum_i \dE{\pre^{\lambda+1}_i} \deriv{\pre^{\lambda+1}_i}{\pre^\lambda_a} = \sum_i \deriv{}{\pre^\lambda_b} \br{ \dE{\pre^{\lambda+1}_i} \deriv{\pre^{\lambda+1}_i}{\act^{\lambda}_a} \deriv{\act^{\lambda}_a}{\pre^{\lambda}_a} }\\
&= \sum_i \W^{\lambda+1}_{i,a} \deriv{}{\pre^\lambda_b} \br{ \dE{\pre^{\lambda+1}_i} \deriv{\act^{\lambda}_a}{\pre^{\lambda}_a} } = 
\sum_i \W^{\lambda+1}_{i,a} \br{\deriv{\act^{\lambda}_a}{\pre^{\lambda}_a} \hE{\pre^\lambda_b}{\pre^{\lambda+1}_i } + \dE{\pre^{\lambda+1}_i} \mhes{\act^\lambda_a}{\pre^\lambda_a}{\pre^\lambda_b}} \\
&= \sum_i \W^{\lambda+1}_{i,a} \br{\deriv{\act^{\lambda}_a}{\pre^{\lambda}_a} \sum_j \hE{\pre^{\lambda+1}_j}{\pre^{\lambda+1}_i} \deriv{\pre^{\lambda+1}_j}{\pre^\lambda_b} + \dE{\pre^{\lambda+1}_i} \delta_{a,b} \dderiv{\act^\lambda_a}{\pre^\lambda_a}} \\
&= \delta_{a,b} \dderiv{\act^\lambda_a}{\pre^\lambda_a} \br{\sum_i \W^{\lambda+1}_{i,a} \dE{\pre^{\lambda+1}_i}} + \sum_{i,j}  \W^{\lambda+1}_{i,a} \deriv{\act^{\lambda}_a}{\pre^{\lambda}_a} \hE{\pre^{\lambda+1}_j}{\pre^{\lambda+1}_i} \W^{\lambda+1}_{j,b} \deriv{\act^{\lambda}_b}{\pre^{\lambda}_b} \\
&= \delta_{a,b} \dderiv{\act^\lambda_a}{\pre^\lambda_a} \dE{\act^{\lambda}_a} + \sum_{i,j}  \W^{\lambda+1}_{i,a} \deriv{\act^{\lambda}_a}{\pre^{\lambda}_a} \hE{\pre^{\lambda+1}_j}{\pre^{\lambda+1}_i}  \deriv{\act^{\lambda}_b}{\pre^{\lambda}_b} \W^{\lambda+1}_{j,b}
\end{split}
\end{equation}
Hence the \prea Hessian can be written in matrix notation as
\beq
\xhess{\lambda} =  B_{\lambda} \W_{\lambda+1}\trans \xhess{\lambda+1} \W_{\lambda+1} B_{\lambda}  + D_{\lambda} 
\eeq
where we define the diagonal matrices 
\begin{align}
\sq{B_\lambda}_{a,a'} &= \delta_{a,a'}\deriv{\act^\lambda_a}{\pre^\lambda_a} = \delta_{a,a'}\nonl'(\pre^\lambda_a)\\
\sq{D_\lambda}_{a,a'} &= \delta_{a,a'}\frac{\partial^2 \act^\lambda_a}{\partial {\pre^\lambda_a}^2} \dE{x^\lambda_a} = \delta_{a,a'}\nonl''(\pre^\lambda_a) \dE{x^\lambda_a}
\end{align}
$\nonl'$ and $\nonl''$ are the first and second derivatives of the \transfer $\nonl$ respectively.

Note that this is a special case of a more general recursion for calculating a Hessian \citep{higher_order_ad}.

\section{Implementation Details\label{app:imp}}

Second-order optimisation methods are based on finding some positive semi-definite quadratic approximation to the function of interest around the current value of the parameters. For the rest of the appendix we define $\hat{f}$ to be a local quadratic approximation to $f$ given a positive semi-definite curvature matrix $C$:
\beq	
\label{impl::approx}
f(\params + \delta) \approx  f(\params) + \delta \trans \nabla_\params f + \frac{1}{2} \delta \trans C \delta = \hat{f}(\delta; C) 
\eeq 
The curvature matrix depends on the specific optimisation method and will often be only an estimate. For notational simplicity, the dependence of $\hat{f}$ on $\params$ is omitted.
Setting $C$ to the true Hessian matrix of $f$ would make $\hat{f}$ the exact second-order Taylor expansion of the function around $\params$. 
However, when $f$ is a nonlinear function, the Hessian can be indefinite, which leads to an ill-conditioned quadratic approximation $\hat{f}$.
For this reason, $C$ is usually chosen to be positive-semi definite by construction, such as the Gauss-Newton or the Fisher matrix.
In the experiments discussed in the paper, $C$ can be either the full Gauss-Newton matrix $\bar{G}$, obtained from running Conjugate Gradient as in \citep{hessian-free-deep}, or a block diagonal approximation to it, denoted by $\widetilde{G}$. 
The analysis below is independent of whether this approximation is based on KFLR, KFRA, KFAC or if it is the exact block-diagonal part of $\bar{G}$, hence there will be no reference to a specific approximation.

Damping plays an important role in second-order optimisation methods.
It can improve the numerical stability of the quadratic approximation and can also be related to trust region methods.
Hence, we will introduce two damping coefficients - one for $\bar{G}$ and one for $\widetilde{G}$.
In practice, an additional weight decay term is often applied to neural network models.
As a result and following the presentation in \citep{kfac}, a total of three extra parameters are introduced:

\begin{itemize}
	\item A $L_2$ regularisation on $\params$ with weight $\frac{\eta}{2}$, which implies an additive diagonal term to both $\bar{G}$ and $\widetilde{G}$
	\item A damping parameter $\tau$ added to the full Gauss-Newton matrix $\bar{G}$
	\item A separate damping parameter $\gamma$ added to the approximation matrix $\widetilde{G}$
\end{itemize} 
Subsequently, for notational convenience, we define $\bar{C} = \bar{G} + (\tau + \eta) I$, which is the curvature matrix obtained when using the full Gauss-Newton matrix in the quadratic approximation \eref{impl::approx}. Similarly, $\widetilde{C} = \widetilde{G} + (\gamma + \eta) I$ is the curvature matrix obtained when using any of the block-diagonal approximations.



\subsection{Inverting the Approximate Curvature Matrix}\label{app:inverse}

The Gauss-Newton method calculates its step direction by multiplying the gradient with the inverse of the curvature matrix, in this case $\widetilde{C}$.
This gives the unmodified step direction:
\beq
\widetilde{\delta} = \widetilde{C}^{-1} \nabla_\params f
\label{eq:impl:invert}
\eeq
Since $\widetilde{C}$ is a block diagonal matrix (each block corresponds to the parameters of a single layer) the problem naturally factorises to solving $L$ independent linear systems:
\beq
\wtl{\delta} = \wtl{C}^{-1} \nabla_{\W_\lambda} f
\label{eq:impl:lamb_invert}
\eeq
For all of the approximate methods under consideration -- KFLR, KFRA and KFAC -- the diagonal blocks $\wtl{G}$ have a Kronecker factored form $\wtl{Q} \otimes \wtl{\gn{}}$, where $\wtl{Q} = \expect{Q_\lambda}$ and $\wtl{\gn{}}$ denotes the approximation to $\expect{\gn{\lambda}}$ obtained from the method of choice. Setting $k = (\eta + \gamma)$ implies:
\beq
\wtl{C} = \wtl{Q} \otimes \wtl{\gn{}} + k I \otimes I
\label{eq:impl:k}
\eeq

The exact calculation of \eref{eq:impl:lamb_invert} given the structural form of $\wtl{C}$ requires the eigen decomposition of both matrices $\wtl{Q}$ and $\wtl{\gn{}}$, see \citep{kfac}. 
However, the well known Kronecker identity $(A \otimes B)^{-1} \vect{V} = A^{-1} V B^{-1}$ motivates the following approximation:
\beq
\label{eq:approx:pi}
\wtl{Q} \otimes \wtl{\gn{}} + k I \otimes I \approx \br{\wtl{Q} + \omega \sqrt{k} I} \otimes \br{\wtl{\gn{}} + \omega^{-1} \sqrt{k} I}
\eeq
The optimal setting of $\omega$ can be found analytically by bounding the norm of the approximation's residual, namely:
\beq
\begin{split}
R(\omega) &= \wtl{Q} \otimes \wtl{\gn{}} + k I \otimes I - \br{\wtl{Q} + \omega \sqrt{k} I} \otimes \br{\wtl{\gn{}} + \omega^{-1} \sqrt{k} I} \\
&= - \omega^{-1} \sqrt{k} \wtl{Q} \otimes I - \omega \sqrt{k} \wtl{\gn{}} \otimes I \\
||R(\pi)|| &\le \omega^{-1} \sqrt{k} || \wtl{Q} \otimes I || + \omega \sqrt{k} ||\wtl{\gn{}} \otimes I|| 
\end{split}
\eeq

Minimising the right hand side with respect to $\omega$ gives the solution
\beq
\omega = \sqrt{\frac{||\widetilde{Q}_\lambda \otimes I||}{||I \otimes \widetilde{\gn{\lambda}}||}}
\eeq

The choice of the norm is arbitrary, but  for comparative purposes with previous work on KFAC, we use the trace norm in all of our experiments.
Importantly, this approach is computationally cheaper as it requires solving only two linear systems per layer, compared to an eigen decomposition and four matrix-matrix multiplications for the exact calculation.
Alternatively, one can consider this approach as a special form of damping for Kronecker factored matrices.


\subsection{Choosing the Step Size}

The approximate block diagonal Gauss-Newton update can be significantly different from the full Gauss-Newton update. 
It is therefore important, in practice, to choose an appropriate step size, especially in cases where the curvature matrices are estimated from mini-batches rather than the full dataset. 
The step size is calculated based on the work in \citep{kfac}, using the quadratic approximation $\hat{f}(\delta; \bar{C})$ from \eref{impl::approx}, induced by the full Gauss-Newton matrix.
Given the initial step direction $\widetilde{\delta}$ from \eref{eq:impl:invert} a line search is performed along that direction and the resulting optimal step size is used for the final update.
\beq
\alpha_* = \argmin_\alpha \hat{f}(\alpha \widetilde{\delta}; \bar{C}) = \argmin_\alpha f(\params) + \alpha \widetilde{\delta}\trans \nabla f + \frac{1}{2} \alpha^2 \widetilde{\delta}\trans \bar{C} \widetilde{\delta}
\eeq
This can be readily solved as
\beq
\delta_* = \alpha_* \widetilde{\delta} = - \frac{\widetilde{\delta}\trans \nabla f }{\widetilde{\delta}\trans \bar{C} \widetilde{\delta}} \widetilde{\delta} 
\label{eq:impl:deltastar}
\eeq
where
\beq
\widetilde{\delta}\trans \bar{C} \widetilde{\delta} = \widetilde{\delta}\trans \bar{G} \widetilde{\delta} + (\tau + \eta) \widetilde{\delta}\trans \widetilde{\delta}
\eeq
The term $\widetilde{\delta}\trans \bar{G} {\delta}$ can be calculated efficiently (without explicitly evaluating $\bar{G}$) using the R-operator \citep{rop}. 
The final update of the approximate GN method is $\delta_*$. 
Notably, the damping parameter $\gamma$ affects the resulting update direction $\widetilde{\delta}$, whilst $\tau$ affects only the step size along that direction.

\subsection{Adaptive Damping}

\subsubsection{$\tau$}

In order to be able to correctly adapt $\tau$ to the current curvature of the objective we use a Levenberg-Marquardt heuristic based on the reduction ratio $\rho$ defined as
\begin{equation}
\rho = \frac{f(\params + \delta_*) - f(\params)}{\hat{f}(\delta_*; \bar{C}) - \hat{f}(0; \bar{C})}
\end{equation}
This quantity measures how well the quadratic approximation matches the true function. 
When $\rho < 1$ it means that the true function $f$ has a lower value at $\params + \delta_*$  (and thus the quadratic underestimates the curvature), while in the other case the quadratic overestimates the curvature. 
The Levenberg-Marquardt method introduces the parameter $\omega_\tau < 1$. When  $\rho > 0.75$ the $\tau$ parameter is multiplied by $\omega_\tau$, when  $\rho < 0.75$ it is divided by $\omega_\tau$.
In order for this to not introduce a significant computational overhead (as it requires an additional evaluation of the function -- $f(\params + \delta_*)$) we adapt $\tau$ only every $T_\tau$ iterations. 
For all experiments we  used $\omega_\tau = 0.95^{T_\tau}$ and $T_\tau = 5$.

\subsubsection{$\gamma$}

The role of $\gamma$ is to regularise the approximation of the quadratic function $\hat{f}(\delta, \widetilde{C})$ induced by the approximate Gauss-Newton to that induced by the full Gauss-Newton $\hat{f}(\delta, \bar{C})$.
This is in contrast to $\tau$, which regularises the quality of the latter approximation to the true function.
$\gamma$ can be related to how well the approximate curvature matrix $\widetilde{C}$ reflects the full Gauss-Newton matrix. 
The main reason for having two parameters is because we have two levels of approximations, and each parameter independently affects each one of them:
\beq
f(\params + \delta) \stackrel{\tau}{\approx} \hat{f}(\delta; \bar{C}) \stackrel{\gamma}{\approx} \hat{f}(\delta; \widetilde{C}) 
\eeq

The parameter $\gamma$ is updated greedily.
Every $T_\gamma$ iterations the algorithm computes the update $\delta_*$ for each of $\{\omega_\gamma \gamma, \gamma, \omega_\gamma^{-1} \gamma, \}$ and some scaling factor $\omega_\gamma < 1$. 
From these three values the one that minimises $\hat{f}(\delta; \bar{C})$ is selected. 
Similar to the previous section, we use $ \omega_\gamma = 0.95^{T_\gamma}$ and $T_\gamma = 20$ across all experiments.

\subsection{Parameter Averaging}
Compared with stochastic first-order methods (for example stochastic gradient descent), stochastic second-order methods do not exhibit any implicit averaging.  
To address this, \citet{kfac} introduce a separate value $\widehat{\params}_t$ which tracks the moving average of the parameter values $\params_t$  used for training:
\beq
\widehat{\params}_t = \beta_t \widehat{\params}_{t-1} + (1 - \beta_t) \params_t
\eeq
Importantly, $\widehat{\params}_t$ has no effect or overhead on training as it is not used for the updates on $\params_t$. 
The extra parameter $\beta_t$ is chosen such that in the initial stage when $t$ is small, $\widehat{\params}_t$ is the exact average of the first $t$ parameter values of $\params$:
\beq
\beta_t = \min(0.95, 1 - 1/t)
\eeq

Another factor playing an important role in stochastic second-order methods is the mini-batch size $m$.
In \citet{kfac}, the authors concluded that because of the high signal to noise ratio that arises close to the minimum, in practice one should use increasingly larger batch sizes for KFAC as the optimisation proceeds. 
However, our work does not focus on this aspect and all of the experiments are conducted using a fixed batch size of $1000$.

\section{The Fisher Matrix and KFAC}\label{sec:kfac}

\subsection{The Fisher Matrix}

For a general probability distribution $p_\params(x)$ parametrised by $\params$, the Fisher matrix can be expressed in two equivalent forms \citep{insights}:
\beq
\begin{split}
	\bar{F} &= \expect{\loggrad{\params}{\x} \loggrad{\params}{\x} \trans }_{p_\params(\x)} \\
	&= -\expect{\nabla\loggrad{}{\x}}_{p_\params(\x)} 
\end{split}
\label{eq:fisher:def}
\eeq
By construction the Fisher matrix is positive semi-definite. Using the Fisher matrix in place of the Hessian to form the parameter update $\bar{F}^{-1}g$ is known as Natural Gradient \citep{natural_grad}.

In the neural network setting, the model specifies a conditional distribution $\pmodel_\params(\y|\x)$, and the Fisher matrix is given by
\begin{equation}
\begin{split}
\bar{F} &= \expect{\loggrad{\params}{\x,\y} \loggrad{\params}{\x,\y} \trans }_{\pmodel_\params(\x,\y)} \\
&= \expect{\loggradc{\params} \loggradc{\params} \trans}_{\pmodel_\params(\x,\y)}
\end{split}
\end{equation}
Using the chain rule $\loggradc{\params} = \Jth \trans \loggradc{\pre_L}$ and defining
\beq
\fn{L} \equiv \loggradc{\pre_L} \loggradc{\pre_L} \trans 
\eeq
the Fisher can be calculated as:
\beq
\begin{split}
	\bar{F} &= \ave{\Jth \trans \loggradc{\pre_L} \loggradc{\pre_L} \trans \Jth}_{\pmodel_\params (\x,\y)} \\
	&=  \expect{\Jth \trans \fn{L} \Jth }_{\pmodel_\params (\x,\y)}
\end{split}
\eeq
The equation is reminiscent of \eref{eq:gn:sample} and in \aref{equivalence} we discuss the conditions under which the Fisher and the Gauss-Newton matrix are indeed equivalent.

\subsection{Equivalence between the Gauss-Newton and Fisher Matrices \label{equivalence}}
The expected Gauss-Newton matrix is given by
\beq
\bar{G} = \expect{\Jth \trans \xhess{L} \Jth}_{p(\x,\y)} = \expect{\Jth \trans \expect{\xhess{L}}_{p(\y|\x)} \Jth}_{p(\x)}
\eeq
Using that $\ave{\fn{L}} = \ave{\xhess{L}}$ which follows from \eref{eq:fisher:def} and the fact that the Jacobian $\Jth$ is independent of $\y$, the Fisher matrix can be expressed as:
\begin{equation}
\begin{split}
\bar{F} &= \expect{\Jth \trans \fn{L} \Jth}_{\pmodel_\params (\x,\y)} = \expect{\Jth \trans \expect{\fn{L}}_{\pmodel_\params (\y|\x)} \Jth}_{p(\x)}=
\expect{\Jth \trans \expect{\xhess{L}}_{\pmodel_\params (\y|\x)} \Jth}_{p(\x)}
\end{split}
\label{eq:fisher:equiv}
\end{equation}
Hence the Fisher and Gauss-Newton matrices matrices are equivalent when $\expect{\xhess{L}}_{p(\y|\x)} = \expect{\xhess{L}}_{\pmodel_\params (\y|\x)}$. 
Since the model distribution $\pmodel_\params (\y|\x)$ and the true data distribution $p(\y|\x)$ are not equal, a sufficient condition for the expectations to be equal is $\xhess{L}$ being independent of $\y$. 
Although this might appear restrictive, if $\pre_L$ parametrises the natural parameters of an exponential family distribution this independence holds \citep{insights}. To show this, consider
\beq
\log p(\y|\x, \params) = \log h(\y) + T(\y)\trans \eta(\x, \params) - \log Z(\x, \params) = \log h(\y) + T(\y)\trans \pre_L - \log Z(\pre_L)
\eeq
where $h$ is the base measure, $T$ is the sufficient statistic, $Z$ is the partition function and $\eta$ are the natural parameters. Taking the gradient of the log-likelihood with respect to $\pre_L$
\beq
\nabla_{\pre_L} \log p(\y|\x, \params) = T(\y) - \nabla_{\pre_L} \log Z(\pre_L) 
\eeq
Assuming that the objective is the negative log-likelihood as in \sref{sec:intro} and differentiating again
\beq
\xhess{L} = \nabla\nabla_{\pre_L}  \log Z(\pre_L)
\label{eq:gn:indep}
\eeq

which is indeed independent of $\y$. 
This demonstrates that in many practical supervised learning problems in Machine Learning, the Gauss-Newton and Natural Gradient methods are equivalent.


The parameter update for these approaches is then given by computing $\bar{G}^{-1}g$ or $\bar{F}^{-1}g$, where $g$ is the gradient of the objective with respect to all parameters.
However, the size of the linear systems is prohibitively large in the case of neural networks, thus it is computationally infeasible to solve them exactly.
As shown in \citep{fastmvp}, matrix-vector products with $\bar{G}$ can be computed efficiently using the R-operator \citep{rop}. 
The method does not need to compute $\bar{G}$ explicitly, at the cost of approximately twice the computation time of a standard backward pass. 
This makes it suitable for iterative methods such as conjugate gradient for approximately solving the linear system.
The resulting method is called `Hessian-free', with promising results in deep feedforward and recurrent neural networks \citep{hessian-free-deep, hessian-free-rnn}.
Nevertheless, the convergence of the conjugate gradient step may require many iterations by itself, which can significantly increase the computation overhead compared to standard methods. 
As a result, this approach can have worse performance per clock time compared to a well-tuned first-order method \citep{ontheimportance}. 
This motivates the usage of approximate Gauss-Newton methods instead.

\subsection{The Fisher Approximation to $\ave{\gn{\lambda}}$ and KFAC \label{fisher_kfac}}

The key idea in this approach is to use the fact that the Fisher matrix is an expectation of the outer product of gradients and that it is equal to the Gauss-Newton matrix (\sref{equivalence}). 
This is independent of whether the Gauss-Newton matrix is with respect to $\W_\lambda$ or $\pre_\lambda$, so we can write the \prea Gauss-Newton as

\begin{align}
\ave{\gn{\lambda}}_{p(\x,\y)}
&= \ave{\J{\pre_L}{\pre_\lambda} \trans \xhess{L} \J{\pre_L}{\pre_\lambda}}_{p(\x)} \\
&= \ave{\J{\pre_L}{\pre_\lambda} \trans \ave{\fn{L}}_{p_\params(\y|\x)} \J{\pre_L}{\pre_\lambda}}_{p(\x)}\\
&= \ave{\J{\pre_L}{\pre_\lambda} \trans \fn{L} \J{\pre_L}{\pre_\lambda}}_{p_\params(\x, \y)} \\
&= \ave{\J{\pre_L}{\pre_\lambda} \trans \loggradc{\pre_L} \loggradc{\pre_L} \trans \J{\pre_L}{\pre_\lambda}}_{p_\params(\x, \y)} \\
&=\ave{\loggradc{\pre_\lambda} \loggradc{\pre_\lambda} \trans }_{p_\params(\x,\y)}
\label{eq:fisher:aprox}
\end{align}

where the first equality follows from \eref{eq:gn:indep} and the second one from \eref{eq:fisher:equiv} in the supplement.

We stress here that the resulting expectation is over the model distribution $p_\params(\x,\y)$ and not the data distribution $p(\x,\y)$. 
In order to approximate \eref{eq:fisher:aprox} the method proceeds by taking Monte Carlo samples of the gradients from the model conditional distribution $p_\params(\y|\x)$. 

The KFAC approximation presented in \citep{kfac} is analogous to the above approach, but it is derived in a different way. 
The authors directly focus on the parameter Fisher matrix. 
Using the fact that  $\J{\pre_\lambda}{\W_\lambda} = \act_{\lambda-1} \trans \otimes I$ and $\J{\pre_\lambda}{W_\lambda} \trans v = \act_{\lambda-1} \otimes v$, the blocks of the Fisher matrix become:
\begin{align}
\sq{\bar{F}}_{\lambda,\beta}
&= \ave{\nabla_{\W_\lambda} \log p_\params(\y|\x) \nabla_{W_\beta} \log p_\params(\y|\x) \trans}_{p_\params(\x,\y)} \\
&= \ave{\br{\act_{\lambda-1} \otimes \nabla_{\pre_\lambda} \log p_\params(\y|\x) } \br{\act_{\beta-1} \otimes \nabla_{\pre_\beta} \log p_\params(\y|\x)} \trans}_{p_\params(\x,\y)} \\
&= \ave{\br{\act_{\lambda-1} \act_{\beta-1} \trans} \otimes \br{ \nabla_{\pre_\lambda} \log p_\params(\y|\x) \nabla_{\pre_\beta} \log p_\params(\y|\x)\trans }}_{p_\params(\x,\y)}
\end{align}
This equation is equivalent to our result in \eref{eq:gn:blocks} \footnote{Under the condition that the Fisher and Gauss-Newton matrices are equivalent, see \sref{equivalence}}. In \citep{kfac} the authors similarly approximate the expectation of the Kronecker products by the product of the individual expectations, which makes the second term equal to the \prea Gauss-Newton as in \eref{eq:fisher:aprox}.

\subsection{Differences between KFAC and KFRA}\label{app:kfac:differences}
\label{app:diff_kfac_kfra}

It is useful to understand the computational complexity of both KFAC and KFRA and the implications of the approximations.
In order to not make any additional assumptions about the underlying hardware or mode (serial or parallel) of execution, we denote with $O_{mm}(m,n,p)$ the complexity of a matrix matrix multiplication of an $m \times n$ and $n \times p$ matrices and with $O_{el}(m,n)$ the complexity of an elementwise multiplication of two matrices of size $m \times n$.

\begin{description}
	\item[KFRA] We need to backpropagate the matrix $\ave{\gn{\lambda}}$ of size $D_\lambda \times D_\lambda$, where $D_\lambda$ is the dimensionality of the layer. For each layer, this requires two matrix-matrix multiplications with $W_\lambda$ and single element wise multiplication (this is due to $A_\lambda$ being diagonal, which allows for such a simplification). The overall complexity of the procedure is $2 O_{mm}(D_\lambda, D_\lambda,  D_{\lambda-1}) + O_{el}(D_{\lambda-1}, D_{\lambda-1})$.
	
	\item[KFAC] We need to draw samples from $p_\params(y|x)$ for each datapoint $x$ and backprogate the corresponding gradients through the network (this is in addition to the gradients of the objective function). This requires backpropagating a matrix of size $D_{\lambda-1} \times NS$, where $S$ denotes the number of samples taken per datapoint. Per layer, the method requires also two matrix-matrix multiplications (one with $W_\lambda$ and the outer product of $C_\lambda^s$) and a single element wise multiplication. The overall complexity of the procedure is $O_{mm}(NS, D_\lambda,  D_{\lambda-1}) + O_{el}(NS, D_{\lambda-1}) + O_{mm}(D_{\lambda-1}, NS, D_{\lambda-1})$.
	
\end{description}
There are several observations which we deem important. Firstly, if $N = 1$, KFRA is no longer an approximate method, but computes the exact $\gn{\lambda}$ matrix. Secondly, if $S = 1$ and $D_{\lambda} \sim N$ then the two methods have similar computational complexity. If we assume that the complexity scales linearly in $S$, in the extreme case of $S = N$ and $D_{\lambda} \sim N$, it is possible to execute KFRA independently for each datapoint providing the exact value $\gn{\lambda}$ for the same computational cost, while KFAC would nevertheless still be an approximation. 

\section{The Rank of the Monte Carlo Gauss-Newton}
\label{sec:gn_rank}
Using the definition of the sample Gauss-Newton matrix in \eref{eq:gn:sample} we can infer that its rank is bounded by the rank of $\xhess{L}$:
\beq
G \equiv {\Jth} \trans \xhess{L} \Jth \Rightarrow \mathbf{rank}(G) \leq \mathbf{rank}(\xhess{L})
\eeq
This does not provide any bound on the rank of the ``true" Gauss-Newton matrix, which is an expectation of the above.
However, for any practical algorithm which approximates the expectations via $N$ Monte Carlo samples as:
\beq
\bar{G} = \expect{G} \approx \frac{1}{N} \sum_n G_n 
\eeq
it provides a bound on the rank of the resulting matrix.
Using the sub-additive property of the rank operator, it follows that $\mathbf{rank}(\frac{1}{N} \sum_n G_n ) \le \mathbf{rank}(\xhess{L}) N$.
Similarly, the approximate Fisher matrix computed in KFAC will have a rank bounded by $NS$, where $S$ is the number of samples taken per data point (usually one). 
This provides an explanation for the results in \sref{sec:exp:mix} for binary classification, since the last layer output in this problem is a scalar, thus its rank is 1.
Hence, both the Gauss-Newton and the Fisher for a mini-batch have a rank bounded by the mini-batch size $N$.
This leads to the conclusion that in such a situation the curvature information provided from a Monte-Carlo estimate is not sufficient to render the approximate Gauss-Newton methods competitive against well-tuned first order methods, although we observe that in the initial stages they are still better.
In some of the more recent works on KFAC the authors use momentum terms in conjunction with the second-order updates or do not rescale by the full Gauss-Newton.
This leaves space for exploration and more in depth research on developing techniques that can robustly and consistently improve the performance of second-order methods for models with a small number of outputs and small batch sizes.

\section{Absence of Smooth Local Maxima for Piecewise Linear Transfer Functions}\label{app:locmax}

In order to show that the Hessian of a neural network with piecewise linear \transfers can have no differentiable strict local maxima, we first establish that all of its diagonal blocks are positive semi-definite.

\begin{lemm}
\label{eq:lemma1}
Consider a neural network as defined in \eref{eq:nn}.
If the second derivative of all \transfers $f_\lambda$ for $1 \leq \lambda \leq L$ is zero where defined, and if the Hessian of the error function w.r.t. the outputs of the network is positive semi-definite, then all blocks
\beq
\fullh_\lambda = \frac{\partial^2 E}{\partial \vect{W_\lambda} \partial \vect{W_\lambda}}
\eeq
on the diagonal of the Hessian --- corresponding to the weights $\W_\lambda$ of single layer --- are positive semi-definite.
\label{lemma1}
\end{lemm}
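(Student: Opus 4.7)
My plan is a short backward induction on the layer index using the recursion \eref{eq:hess:recursion} together with the Kronecker form \eref{eq:hess:kron}, from which the claim follows immediately.

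First I would establish by reverse induction on $\lambda$ from $L$ down to $1$ that the \prea Hessian $\xhess{\lambda}$ is positive semi-definite at every point where it is defined. The base case is direct: by hypothesis $\xhess{L}$, which is the Hessian of the loss with respect to the network outputs, is PSD. For the inductive step, I would invoke the recursion
\beq
\xhess{\lambda} = B_\lambda \W_{\lambda+1}\trans \xhess{\lambda+1} \W_{\lambda+1} B_\lambda + D_\lambda.
\eeq
Since $\nonl_\lambda$ is piecewise linear, wherever $\nonl_\lambda''$ is defined it vanishes, and therefore $D_\lambda = \diag{\nonl_\lambda''(\pre_\lambda)\,\partial E/\partial \act_\lambda} = 0$. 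Writing $M_\lambda \equiv \W_{\lambda+1} B_\lambda$, the recursion reduces to $\xhess{\lambda} = M_\lambda\trans \xhess{\lambda+1} M_\lambda$, which is PSD whenever $\xhess{\lambda+1}$ is PSD, closing the induction.

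Given PSD of $\xhess{\lambda}$, I would conclude by appealing to \eref{eq:hess:kron}, which states
\beq
\fullh_\lambda = \br{\act_{\lambda-1} \act_{\lambda-1}\trans} \otimes \xhess{\lambda}.
\eeq
The first factor is a rank-one PSD matrix (an outer product of a real vector with itself), the second is PSD by the induction, and a Kronecker product of two PSD matrices is PSD (its eigenvalues are the pairwise products of the factor eigenvalues, all non-negative). Hence $\fullh_\lambda$ is PSD, as claimed.

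The only subtlety worth flagging, rather than a genuine obstacle, is the phrase \emph{where defined}: piecewise linear \transfers fail to be twice differentiable on a measure-zero set of boundary points, so the recursion and hence the conclusion hold off this set. Since the statement concerns the Hessian block $\fullh_\lambda$ at points where $E$ is twice differentiable in $\W_\lambda$, this is exactly the regime in which our argument applies, and no extra care is needed beyond noting this caveat.
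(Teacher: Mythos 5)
Your proposal is correct and follows essentially the same route as the paper: backward induction on the pre-activation Hessian via the recursion with $D_\lambda = 0$, then transferring positive semi-definiteness to the weight block through the Kronecker structure of \eref{eq:hess:kron}. The only cosmetic difference is the last step, where you invoke the eigenvalue characterisation of a Kronecker product of PSD matrices while the paper evaluates the quadratic form $\vect{V}\trans \fullh_\lambda \vect{V}$ directly via the identity $(A \otimes B)\vect{V} = \vect{B V A\trans}$; both are standard and equivalent.
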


\begin{proof}

By the definition in \eref{eq:hes:d_def} $D^\lambda_{i,j} = \delta_{i,j} \nonl '' (h^\lambda_{i,j})$. 
From the assumption of the lemma,  $\nonl '' = 0$ for all layers, hence $\forall \lambda \quad D_\lambda = 0$. 
Using the recursive equation \eref{eq:hess:recursion} we can analyze the quadratic form $v \trans \xhess{\lambda} v$:
\beq
\begin{split}
\xhess{\lambda} &= B_{\lambda} \W_{\lambda+1}\trans \xhess{\lambda+1} \W_{\lambda+1} B_{\lambda}  + D_{\lambda} \\
&=  \left( \W_{\lambda+1} B_{\lambda} \right) \trans \xhess{\lambda+1} \left(\W_{\lambda+1} B_{\lambda} \right)
\end{split}
\eeq
where we used the fact that by definition $B_\lambda$ is a diagonal matrix, thus it is symmetric and $B_\lambda = B_\lambda \trans$. 
Defining 
\beq
\tilde{v} =  \W_{\lambda+1} B_{\lambda} v
\eeq
yields
\beq
\begin{split}
v \trans \xhess{\lambda} v &= \left( \W_{\lambda+1} B_{\lambda} v \right) \trans \xhess{\lambda+1} \left( \W_{\lambda+1} B_{\lambda} v \right) \\
&= \tilde{v} \trans \xhess{\lambda+1} \tilde{v} 
\end{split}
\eeq
hence
\beq
\xhess{\lambda+1} \geq 0 \Rightarrow \xhess{\lambda} \geq 0 
\eeq

It follows by induction that if $\xhess{L}$ is positive semi-definite, all of the \prea Hessian matrices are positive semi-definite as well.

Using the proof that the blocks $\fullh_\lambda$ can be written as a Kronecker product in \eref{eq:hess:kron}, we can analyze the quadratic form of the Hessian block diagonals:
\beq
\begin{split}
\vect{V} \trans \fullh_{\lambda} \vect{V} &= \vect{V} \trans \left[\left(\act_\lambda \act_\lambda \trans \right) \otimes \xhess{\lambda} \right] \vect{V} \\
&= \vect{V} \trans \vect{\xhess{\lambda} V \act_\lambda \act_\lambda \trans} \\
&= \trace \left( V \trans \xhess{\lambda} V \act_\lambda \act_\lambda \trans \right) \\
&= \trace \left( \act_\lambda \trans V \trans \xhess{\lambda} V \act_\lambda \right) \\
&= \left(V \act_\lambda \right) \trans \xhess{\lambda} \left(V \act_\lambda \right) \\
\xhess{\lambda} \geq 0 \Rightarrow \fullh_\lambda \geq 0 
\end{split}
\eeq
The second line follows from the well known identity $(A \otimes B) \vect{V} = \vect{B V A \trans}$. 
Similarly, the third line follows from the fact that $\vect{A} \trans \vect{B} = \trace \left( A \trans B \right)$.
The fourth line uses the fact that $\trace \left( A B \right) = \trace \left( B A \right)$ when the product $AB$ is a square matrix. This concludes the proof of the lemma.

\end{proof}

This lemma has two implications:
\begin{itemize}
    \item If we fix the weights of all layers but one, the error function becomes \emph{locally} convex, wherever the second derivatives of all \transfers in the network are defined.
    \item The error function can have no differentiable strict local maxima.
\end{itemize}

We formalise the proof of the second proposition below:

\begin{lemm}
Under the same assumptions as in Lemma~\ref{lemma1}, the overall objective function $E$ has no differentiable strict local maxima with respect to the parameters $\params$ of the network. 
\label{lemma2}
\end{lemm}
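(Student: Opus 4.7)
The strategy is to proceed by contradiction and use Lemma~\ref{lemma1} to obtain a layerwise convexity obstruction. Suppose $\theta^*$ were a differentiable strict local maximum of $E$. Since $E$ is twice differentiable at $\theta^*$, every piecewise linear transfer function must stay on a single linear piece throughout some open neighborhood $U$ of $\theta^*$; otherwise the second derivative would fail to exist at $\theta^*$ itself. On $U$ the hypotheses of Lemma~\ref{lemma1} therefore hold pointwise, so the diagonal block $\fullh_\lambda(\theta)$ is PSD for every $\theta \in U$ and every layer $\lambda$ — not merely at the candidate point.

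I would then project the putative maximum onto a single layer. Fix $\W_\mu = \W_\mu^*$ for all $\mu \ne \lambda$ and define $g_\lambda(\W_\lambda) = E(\W_1^*, \dots, \W_{\lambda-1}^*, \W_\lambda, \W_{\lambda+1}^*, \dots, \W_L^*)$ on a neighborhood of $\W_\lambda^*$. Its Hessian at each $\W_\lambda$ coincides with the corresponding evaluation of $\fullh_\lambda$, which is PSD by the previous step, so $g_\lambda$ is convex on that neighborhood. Because $\theta^*$ is a strict local maximum of $E$, the restriction $\W_\lambda^*$ must be a strict local maximum of $g_\lambda$.

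The final step invokes the standard fact that a convex function cannot have a strict local maximum: restricted to the line through $\W_\lambda^*$ in any direction $v$, the map $\phi(t) = g_\lambda(\W_\lambda^* + t v)$ is convex, so the midpoint inequality $\phi(0) \le \tfrac{1}{2}\br{\phi(t) + \phi(-t)}$ forces at least one of $\phi(\pm t)$ to be $\ge \phi(0)$ for arbitrarily small $|t|$, contradicting strictness. This contradiction proves the lemma.

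The conceptual subtlety, and the main thing that has to be handled carefully, is that the trace sketch in Section~\ref{sec:pltf} (combining $\trace(\fullh) = \sum_\lambda \trace(\fullh_\lambda) \ge 0$ with the negative semi-definiteness of $\fullh$ necessary at any maximum to force $\fullh(\theta^*) = 0$) only rules out Hessian-nondegenerate local maxima; it is compatible with degenerate strict maxima such as $-\|\theta - \theta^*\|^4$, where the second derivative vanishes but the function still decreases strictly via higher-order terms. The fix above is to exploit the pointwise PSD property of $\fullh_\lambda$ on the entire linear-regime neighborhood $U$, promoting a single-point eigenvalue statement to a genuine local convexity per layer that is incompatible with any strict local maximum, degenerate or not.
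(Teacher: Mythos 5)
Your proof is correct, but it takes a genuinely different route from the paper's, and in fact your closing remark identifies a real gap in the argument the paper actually gives. The paper's proof of Lemma~\ref{lemma2} asserts that a strict local maximum requires all eigenvalues of the Hessian to be simultaneously negative, and then contradicts this with $\trace(\fullh)=\sum_\lambda\trace(\fullh_\lambda)\ge 0$. As you note, negative \emph{definiteness} is sufficient but not necessary for a strict local maximum; the correct necessary condition is negative semi-definiteness, and combining that with the non-negative trace and the PSD blocks only forces $\fullh(\params^*)=0$, which a degenerate strict maximum of the type $-\|\params-\params^*\|^4$ survives. Your argument avoids this entirely: you observe that in the differentiable regime the transfer functions sit on fixed linear pieces throughout an open neighbourhood, so Lemma~\ref{lemma1} gives $\fullh_\lambda\succeq 0$ \emph{pointwise on that neighbourhood}, hence each single-layer restriction $g_\lambda$ is convex there, and the midpoint inequality rules out any strict local maximum of a convex function, degenerate or not. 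What the paper's trace argument buys is brevity and a statement purely about the spectrum at a single point; what your argument buys is an actually airtight proof, at the cost of invoking the neighbourhood structure of the piecewise-linear regime. The one step worth tightening is the claim that twice differentiability of $E$ at $\params^*$ forces every pre-activation off its kink on a neighbourhood --- strictly speaking one should simply take ``differentiable local maximum'' to mean a maximum lying in the open region where no pre-activation sits at a boundary of a linear piece, which is exactly how the paper itself uses the term in \secref{sec:pltf}, and then your argument goes through verbatim.
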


\begin{proof}
For a point to be a strict local maximum, all eigenvalues of the Hessian at this location would need to be simultaneously negative.
However, as the trace of a matrix is equal to the sum of the eigenvalues it is sufficient to prove that $\trace{(\fullh)} \geq 0$.

The trace of the full Hessian matrix is equal to the sum of the diagonal elements, so it is also equal to the sum of the traces of the diagonal blocks.
Under the assumptions in Lemma~\ref{lemma1}, we showed that all of the diagonal blocks are positive semi-definite, hence their traces are non-negative.
It immediately follows that:
\beq
\trace{(\fullh)} = \sum_{\lambda=1}^L \trace{(\fullh_\lambda)} \ge 0
\eeq
Therefore, it is impossible for all eigenvalues of the Hessian to be simultaneously negative.
As a corollary it follows that all strict local maxima must lie in the non-differentiable boundary  points of the nonlinear \transfers.
\end{proof}

\newpage





\section{Additional Figures}

\subsection{CPU Benchmarks}

\iftoggle{show_appendix_figs}{
\begin{figure*}[h!]
    \centering
    \begin{subfigure}[b]{0.6\textwidth}
        \includegraphics[width=0.95\linewidth]{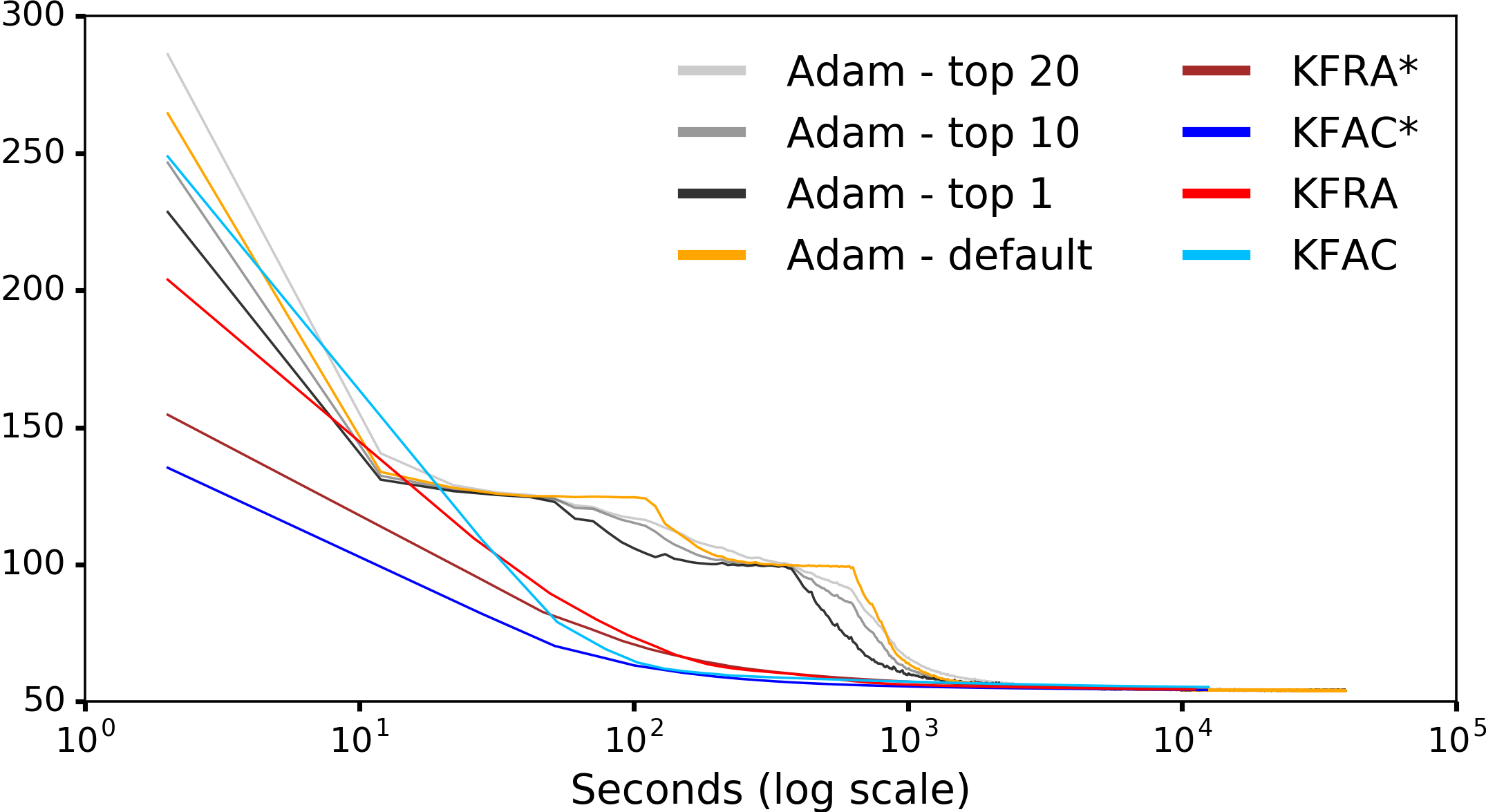}
        \caption{CURVES}
    \end{subfigure}%
    \\
    \begin{subfigure}[b]{0.6\textwidth}
        \includegraphics[width=0.95\linewidth]{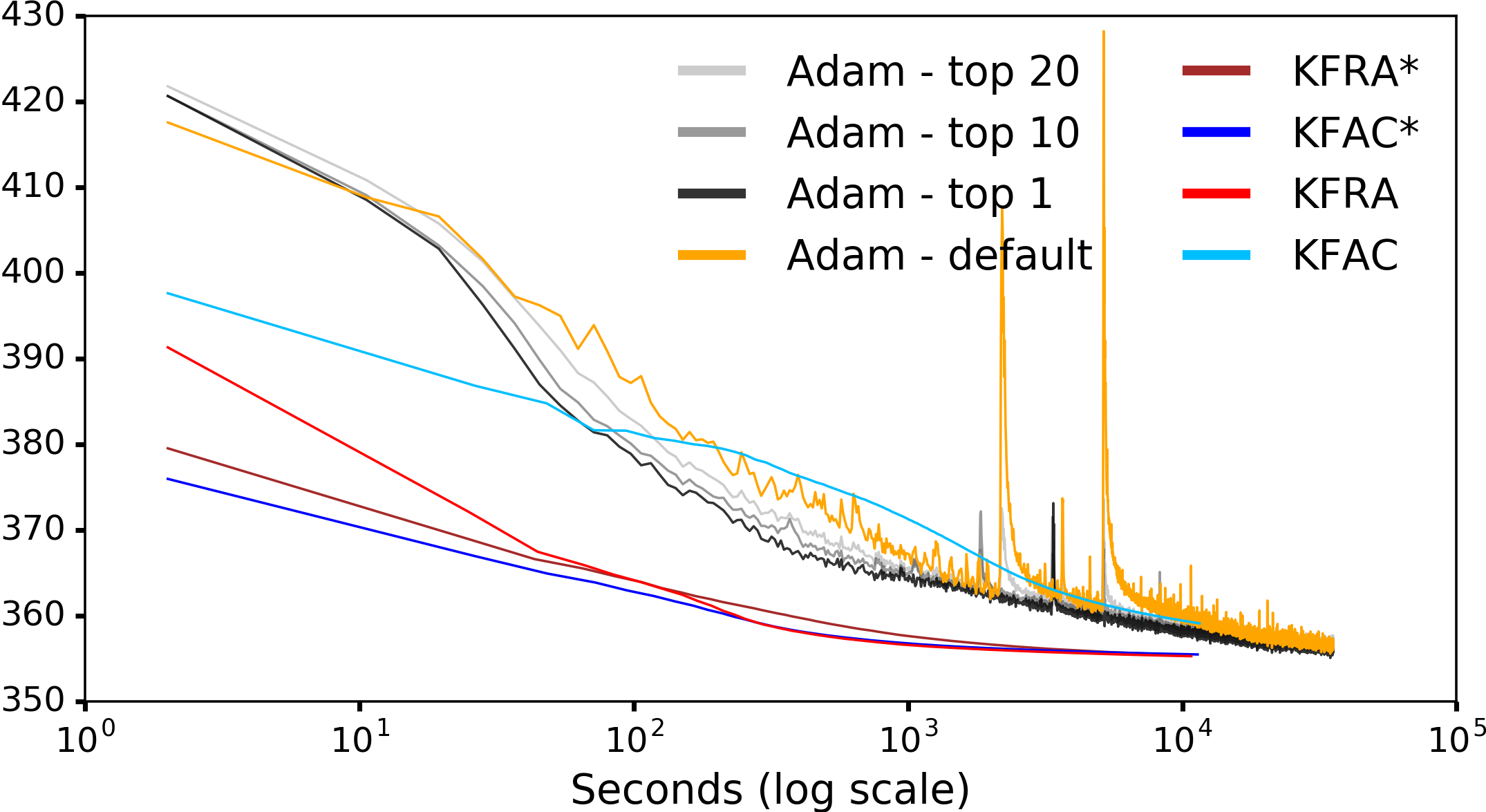}
        \caption{FACES}
    \end{subfigure}%
    \\
    \begin{subfigure}[b]{0.6\textwidth}
        \includegraphics[width=0.95\linewidth]{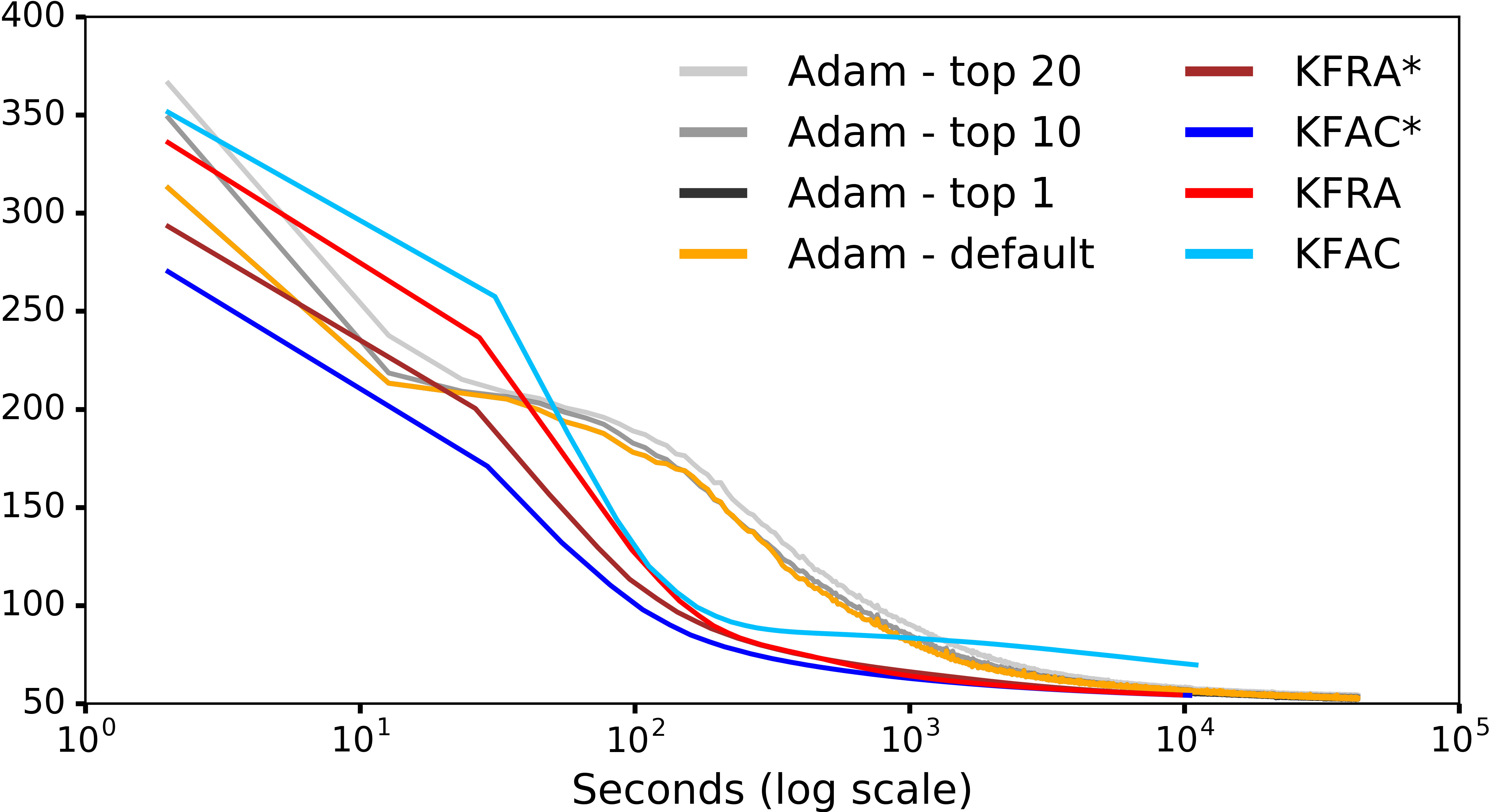}
        \caption{MNIST}
    \end{subfigure}
    \caption{Optimisation performance on the CPU. These timings are obtained with a previous implementation in Arrayfire, different to the one used for the figures in the main text. For the second-order methods, the asterisk indicates the use of the approximate inversion as described in \sref{app:inverse}. The error function on all three datasets is binary cross-entropy.}
    \label{fig:cmp:cpu}
\end{figure*}
}

\clearpage

\subsection{Comparison of the Alignment of the Approximate Updates with the Gauss-Newton Update}
\label{app:updates}

\iftoggle{show_appendix_figs}{
\begin{figure*}[h!]
    \centering
    \begin{subfigure}[b]{0.5\textwidth}
        \includegraphics[width=0.95\linewidth]{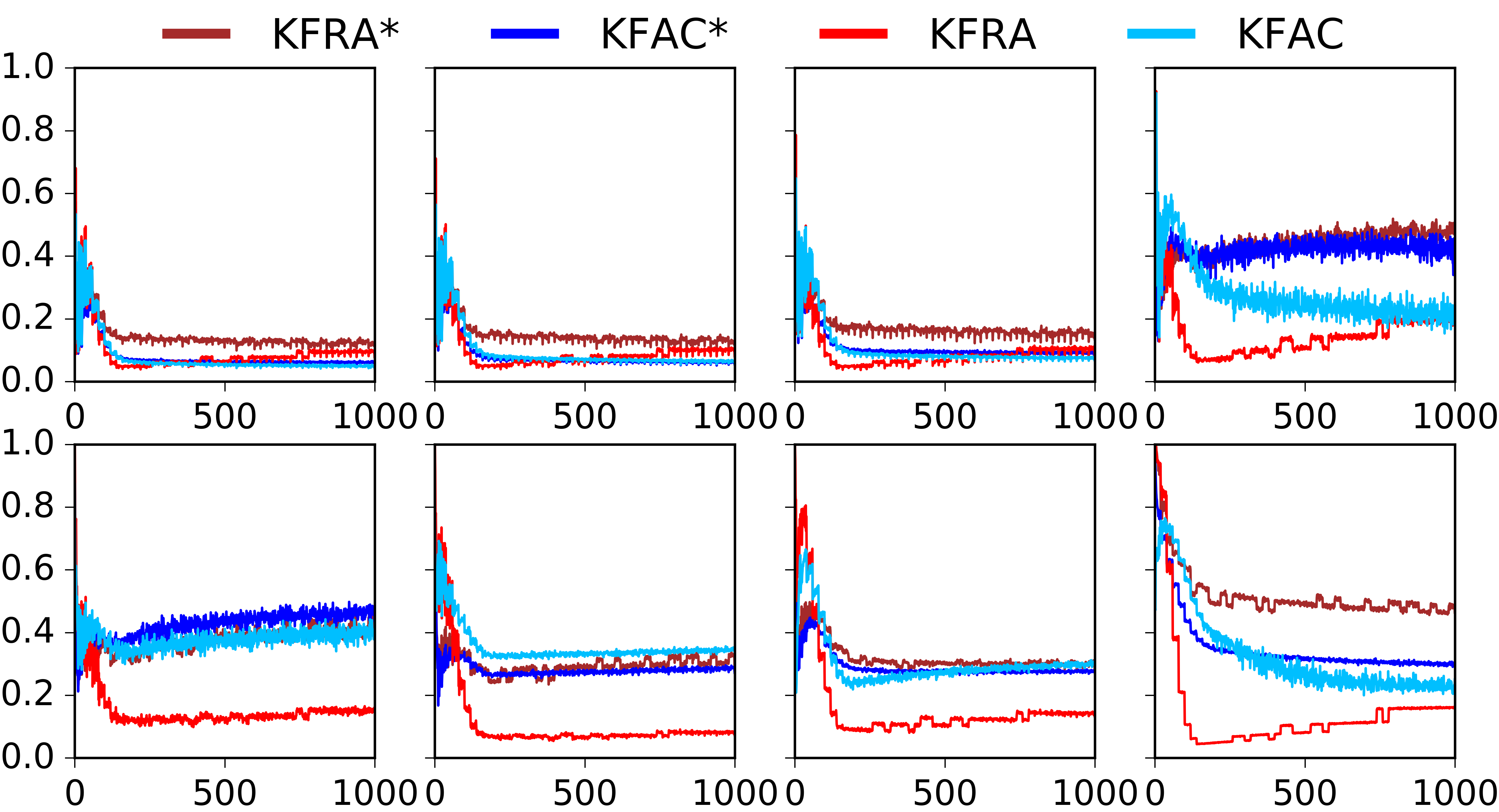}
        \caption{Block-diagonal Gauss-Newton}
        \label{fig:curves:cosbgn}
    \end{subfigure}%
    ~
    \begin{subfigure}[b]{0.5\textwidth}
    	\includegraphics[width=0.95\linewidth]{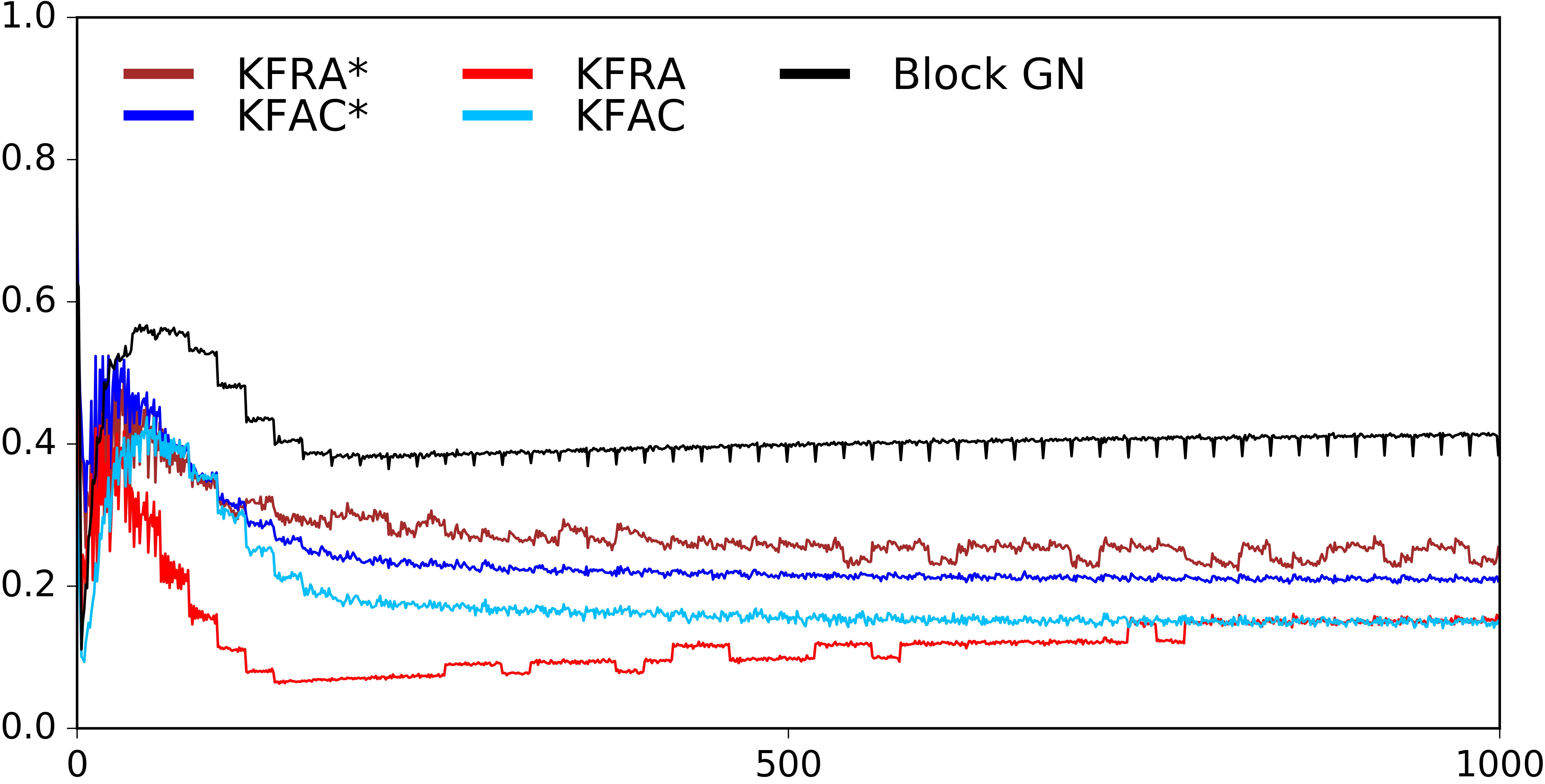}
    	\caption{Full Gauss-Newton}
    	\label{fig:curves:cosfgn}
    \end{subfigure}
    \caption{CURVES: Cosine similarity between the update vector per layer, given by the corresponding approximate method, $\widetilde{\delta}_\lambda$ with that for the block-diagonal GN (\protect\subref{fig:faces:cosbgn}) and the full vector with that from the full GN matrix (\protect\subref{fig:faces:cosfgn}).  The optimal value is 1.0. The $^*$ indicates approximate inversion in \eref{eq:approx:pi}. The $x$-axis is the number of iterations.  Layers one to four are in the top; five to eight in the bottom row. The trajectory of parameters we follow is the one generated by KFRA$^*$.}
    \label{fig:curves:updates}
\end{figure*}
}

\iftoggle{show_appendix_figs}{
\begin{figure*}[h!]
    \centering
    \begin{subfigure}[b]{0.5\textwidth}
        \includegraphics[width=0.95\linewidth]{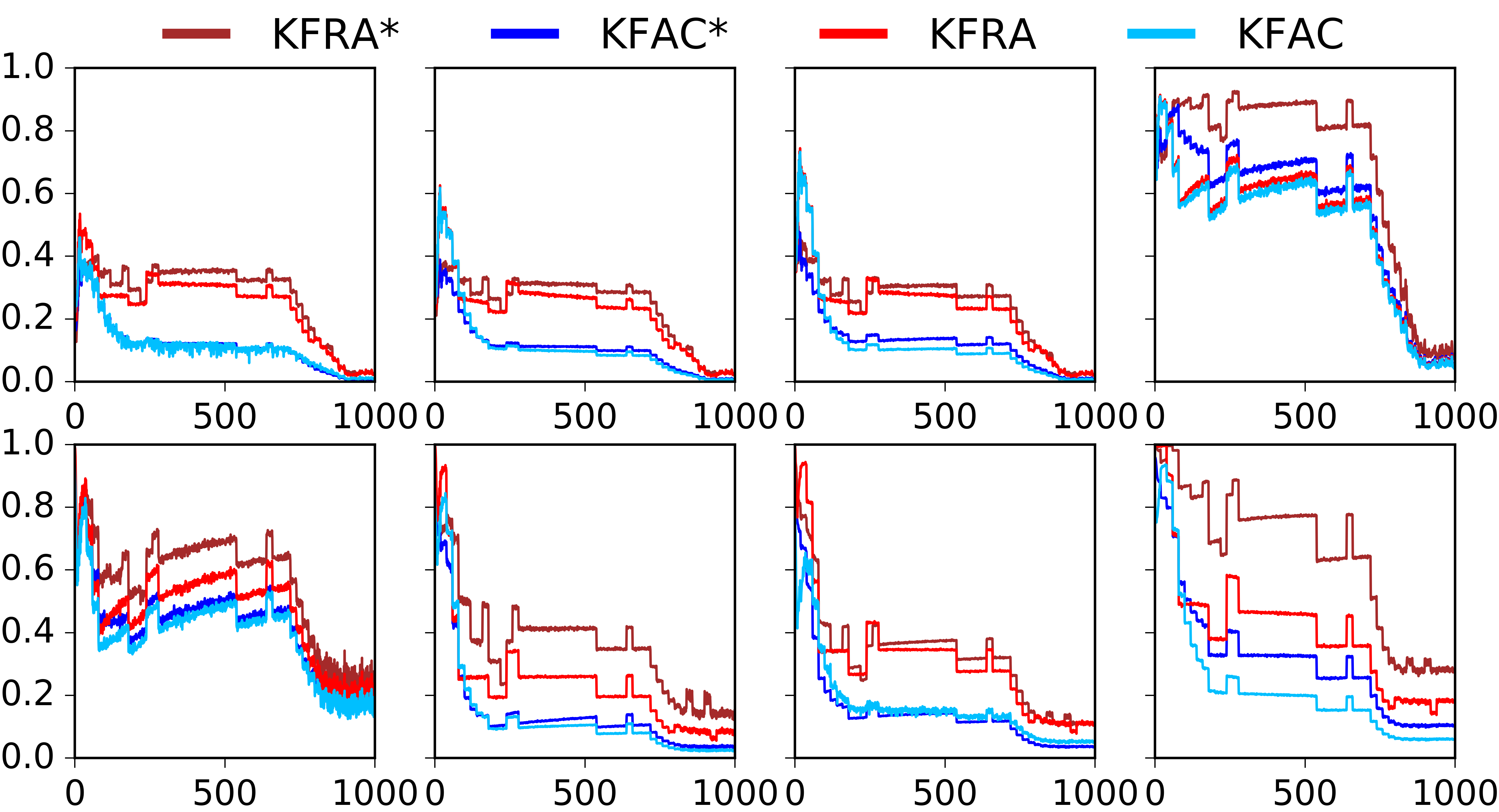}
        \caption{Block-diagonal Gauss-Newton}
        \label{fig:faces:cosbgn}
    \end{subfigure}%
    ~
    \begin{subfigure}[b]{0.5\textwidth}
    	\includegraphics[width=0.95\linewidth]{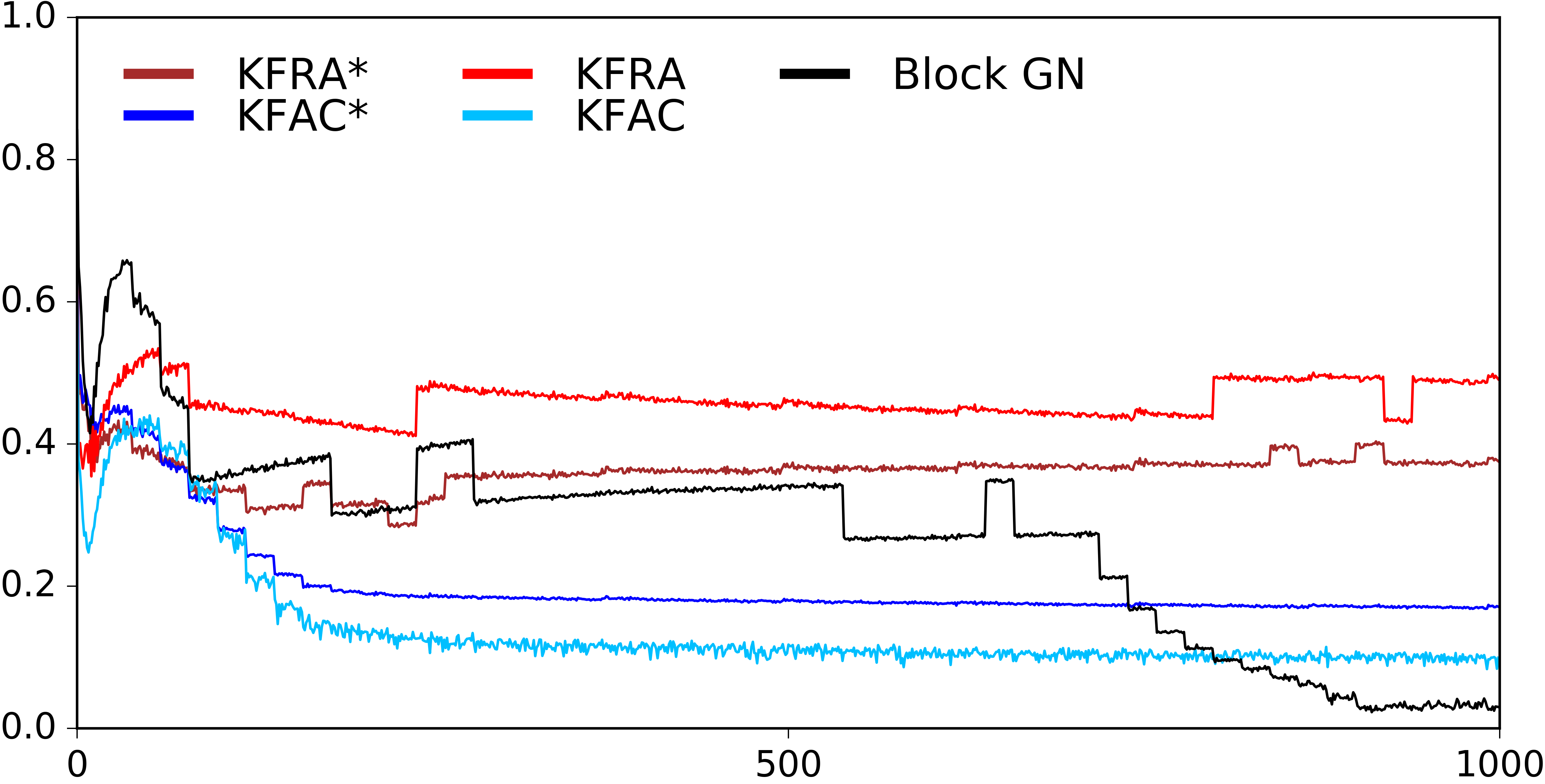}
    	\caption{Full Gauss-Newton}
    	\label{fig:faces:cosfgn}
    \end{subfigure}
    \caption{FACES: Cosine similarity between the update vector per layer, given by the corresponding approximate method, $\widetilde{\delta}_\lambda$ with that for the block-diagonal GN (\protect\subref{fig:faces:cosbgn}) and the full vector with that from the full GN matrix (\protect\subref{fig:faces:cosfgn}).  The optimal value is 1.0. The $^*$ indicates approximate inversion in \eref{eq:approx:pi}. The $x$-axis is the number of iterations.  Layers one to four are in the top; five to eight in the bottom row. The trajectory of parameters we follow is the one generated by KFRA$^*$.}
    \label{fig:faces:updates}
\end{figure*}
}

\iftoggle{show_appendix_figs}{
\begin{figure*}[h!]
    \centering
    \begin{subfigure}[b]{0.5\textwidth}
        \includegraphics[width=0.95\linewidth]{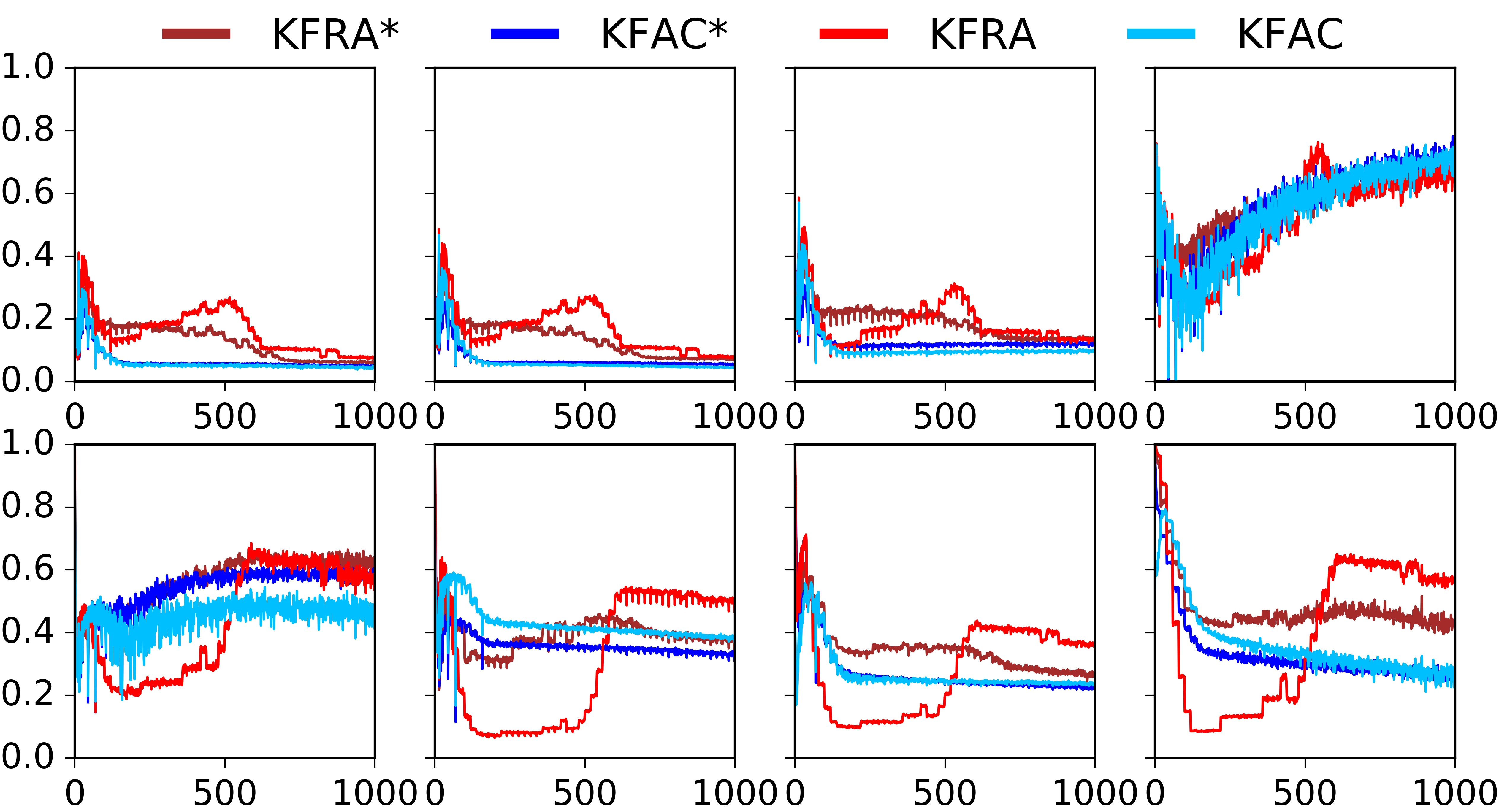}
        \caption{Block-diagonal Gauss-Newton}
        \label{fig:mnist:cosbgn}
    \end{subfigure}%
    ~
    \begin{subfigure}[b]{0.5\textwidth}
    	\includegraphics[width=0.95\linewidth]{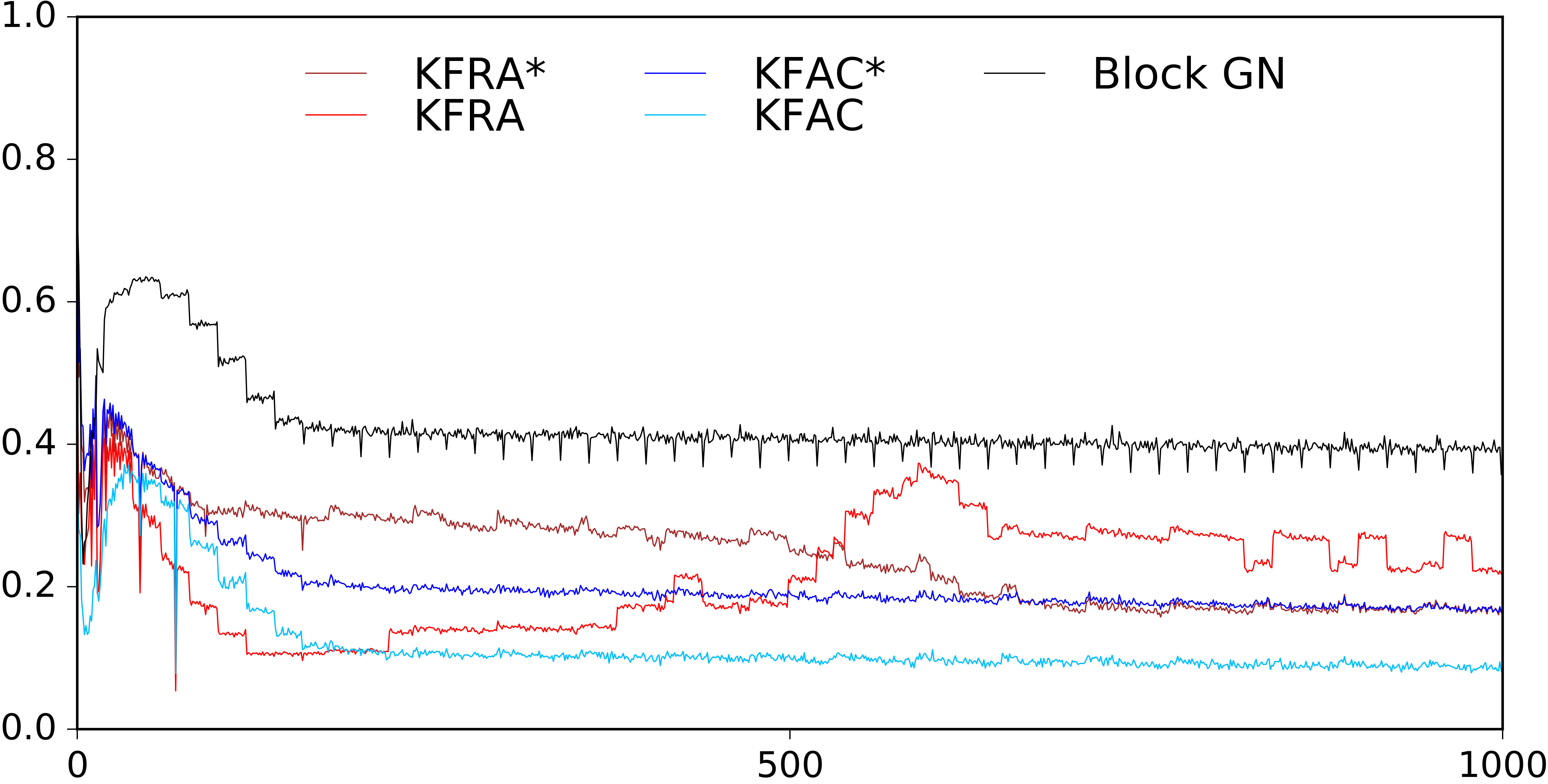}
    	\caption{Full Gauss-Newton}
    	\label{fig:mnist:cosfgn}
    \end{subfigure}
    \caption{MNIST: Cosine similarity between the update vector per layer, given by the corresponding approximate method, $\widetilde{\delta}_\lambda$ with that for the block-diagonal GN (\protect\subref{fig:faces:cosbgn}) and the full vector with that from the full GN matrix (\protect\subref{fig:faces:cosfgn}).  The optimal value is 1.0. The $^*$ indicates approximate inversion in \eref{eq:approx:pi}. The $x$-axis is the number of iterations.  Layers one to four are in the top; five to eight in the bottom row. The trajectory of parameters we follow is the one generated by KFRA$^*$.}
    \label{fig:mnist:updates}
\end{figure*}
}

To gain insight into the quality of the approximations that are made in the second-order methods under consideration, we compare how well the KFAC and KFRA parameter updates $\widetilde{\delta}$ are aligned with updates obtained from using the regularised block diagonal GN and the full GN matrix.
Additionally we check how using the approximate inversion of the Kronecker factored curvature matrices discussed in \aref{app:imp} impacts the alignment.

In order to find the updates for the full GN method we use conjugate gradients and the R-operator and solve the linear system $\bar{G}\delta=\nabla_\params f$ as in \citep{hessian-free-deep}. For the block diagonal GN method we use the same strategy, however the method is applied independently for each separate layer of the network, see \aref{app:imp}.

We compared the different approaches for batch sizes of 250, 500 and 1000. However, the results did not differ significantly.
We therefore show results only for a batch size of 1000.
In Figures \ref{fig:curves:updates} to \ref{fig:mnist:updates}, subfigure \ref{fig:faces:cosbgn} plots the cosine similarity between the update vector $\widetilde{\delta}_\lambda$ for a specific layer, given by the corresponding approximate method, and the update vector when using the block diagonal GN matrix on CURVES, FACES and MNIST.
Throughout the optimisation, compared to KFAC, the KFRA update has better alignment with the exact GN update.
Subfigure \ref{fig:faces:cosfgn} shows the same similarity for the whole update vector $\widetilde{\delta}$, however in comparison with the update vector given by the full GN update.
Additionally, we also show the similarity between the update vector of the block diagonal GN and the full GN approach in those plots.
There is a decay in the alignment between the block-diagonal and full GN updates towards the end of training on FACES, however this is most likely just due to the conjugate gradients being badly conditioned and is not observed on the other two datasets.

After observing that KFRA generally outperforms KFAC, it is not surprising to see that its updates are beter aligned with both the block diagonal and the full GN matrix.

Considering that (for exponential family models) both methods differ only in how they approximate the expected GN matrix, gaining a better understanding of the impact of the separate approximations on the optimisation performance could lead to an improved algorithm.

\clearpage

\section{Algorithm for a Single Backward Pass}

\begin{algorithm}
\label{alg:backprop}
\caption{Algorithm for KFRA parameter updates excluding heuristic updates for $\eta$ and $\gamma$}
\begin{algorithmic}
	\STATE {\bfseries Input:} minibatch $X$, weight matrices $W_{1:L}$, \transfers $f_{1:L}$, true outputs $Y$, parameters $\eta$ and $\gamma$
	
	\STATE {\itshape - Forward Pass -}
	\STATE $\act_0 = X$
	\FOR{$\lambda = 1$ {\bfseries to} L}
	\STATE $\pre_\lambda = W_\lambda \act_{\lambda - 1}$
	\STATE $\act_\lambda = \nonl_\lambda (\pre_\lambda)$
	\ENDFOR
	
	\STATE {\itshape - Derivative and Hessian of the objective -}
	\STATE $d_L = \dE{\pre_L}\Bigr|_{\pre_L}$
	\STATE $\wtl{\gn{L}} = \expect{\xhess{L}}\Bigr|_{\pre_L}$
	
	\STATE {\itshape - Backward pass -}
	\FOR{$\lambda = L$ {\bfseries to} $1$}
	\STATE {\itshape - Update for $W_\lambda$ -}
	\STATE $g_\lambda = \frac{1}{N} d_\lambda \act_{\lambda-1} \trans + \eta W_\lambda$
	\STATE $\wtl{Q} = \frac{1}{N} \act_{\lambda-1} \act_{\lambda-1}\trans $
	\STATE $\omega = \sqrt{\frac{Tr(\wtl{Q}) * dim(\wtl{\gn{}})}{Tr(\wtl{\gn{}}) * dim(\wtl{Q})}}$ 
	\STATE $k = \sqrt{\gamma + \eta}$
	\STATE $\widetilde{\delta}_\lambda = (\wtl{Q} + \omega k)^{-1} g_\lambda (\wtl{\gn{}} + \omega^{-1} k)^{-1}$
	\IF{$\lambda > 1$}
	\STATE {\itshape - Propagate gradient and approximate \prea Gauss-Newton -}
	\STATE $A_{\lambda-1} = \nonl'(\pre_{\lambda - 1})$
	\STATE $d_{\lambda-1} = W_\lambda \trans d_\lambda \odot A_{\lambda-1}$

	\STATE $\widetilde{\gn{}}_{\lambda - 1} = (W_\lambda \trans \wtl{\gn{}} W_\lambda) \odot \left( \frac{1}{N} A_{\lambda-1} A_{\lambda-1} \trans \right)$
	\ENDIF
	\ENDFOR
	
	\STATE $v = \J{\pre_L}{\theta} \widetilde{\delta}$ \quad \quad (using the R-op from \citep{rop})
	\STATE $\widetilde{\delta} \trans \bar{G} \widetilde{\delta} = v \trans \xhess{L} v$
	\STATE $\widetilde{\delta} \trans \bar{C} \widetilde{\delta} = \widetilde{\delta} \trans \bar{G} \widetilde{\delta} + (\tau + \eta)||\widetilde{\delta}||_2^2$
	
	\STATE $\alpha_* = - \frac{\widetilde{\delta}\trans \nabla f }{\widetilde{\delta}\trans \bar{C} \widetilde{\delta}}$
	\STATE $\delta_* =  \alpha_* \widetilde{\delta} $
	
	\FOR{$\lambda = 1$ {\bfseries to} $L$}
	\STATE $W_\lambda = W_\lambda + \delta_{*\lambda}$
	\ENDFOR
\end{algorithmic}
\end{algorithm}

\end{document}